\newcommand{\R}{\mathbb{R}}
\newcommand{\Z}{\mathbb{Z}}
\newcommand{\N}{\mathbb{N}}
\newcommand{\rndm}[1]{\begin{pmatrix}#1\end{pmatrix}}
\newcommand{\m}[1]{\begin{bmatrix}#1\end{bmatrix}}
\newcommand{\lrp}[1]{\left(#1\right)}
\newcommand{\lrb}[1]{\left[#1\right]}
\newcommand{\bp}[1]{\big(#1\big)}
\newcommand{\Bb}[1]{\Big[#1\Big]}
\newcommand{\ts}[1]{{\textstyle #1}}
\renewcommand{\th}{^{\mathrm{th}}}
\newcommand{\fl}[1]{\left\lfloor #1 \right\rfloor}
\renewcommand{\b}[1]{\mathbf{#1}}
\newcommand{\E}[1]{\mathbb{E}_{#1}}
\DeclareMathOperator{\Id}{Id}
\newtheorem{lemma}{Lemma}
\newtheorem{theorem}[lemma]{Theorem}
\DeclareRobustCommand\onedot{\futurelet\@let@token\@onedot}
\def\@onedot{\ifx\@let@token.\else.\null\fi\xspace}
\def\iid{\emph{i.i.d}\onedot} 
\def\eg{\emph{e.g}\onedot} 
\def\ie{\emph{i.e}\onedot}
\newenvironment{ourbox}{\begin{mdframed}[hidealllines=false,innerleftmargin=10pt,backgroundcolor=white!10,innertopmargin=10pt,innerbottommargin=5pt,roundcorner=10pt]}{\end{mdframed}}
\title{SymmetricDiffusers: Learning Discrete \\ Diffusion on Finite Symmetric Groups}
\author{
  \bf{Yongxing Zhang}$^{1,3}$\thanks{Work done while an intern at Vector Institute.}\,\,\,, \bf{Donglin Yang}$^{2,3}$, \bf{Renjie Liao}$^{2,3}$ \\
  $^1$University of Waterloo \quad $^2$University of British Columbia \quad $^3$Vector Institute \\
  \texttt{nick.zhang@uwaterloo.ca},
  \texttt{\{ydlin, rjliao\}@ece.ubc.ca}
}
\begin{document}

\maketitle

\begin{abstract}
The group of permutations $S_n$, also known as the finite symmetric groups, are essential in fields such as combinatorics, physics, and chemistry.
However, learning a probability distribution over $S_n$ poses significant challenges due to its intractable size and discrete nature.
In this paper, we introduce \textit{SymmetricDiffusers}, a novel discrete diffusion model that simplifies the task of learning a complicated distribution over $S_n$ by decomposing it into learning simpler transitions of the reverse diffusion using deep neural networks. 
We identify the riffle shuffle as an effective forward transition and provide empirical guidelines for selecting the diffusion length based on the theory of random walks on finite groups. 
Additionally, we propose a generalized Plackett-Luce (PL) distribution for the reverse transition, which is provably more expressive than the PL distribution.
We further introduce a theoretically grounded "denoising schedule" to improve sampling and learning efficiency. 
Extensive experiments show that our model achieves state-of-the-art or comparable performance on solving tasks including sorting 4-digit MNIST images, jigsaw puzzles, and traveling salesman problems.
Our code is released at \url{https://github.com/DSL-Lab/SymmetricDiffusers}.
\end{abstract}

\section{Introduction}\label{sec: intro}

As a vital area of abstract algebra, finite groups provide a structured framework for analyzing symmetries and transformations which are fundamental to a wide range of fields, including combinatorics, physics, chemistry, and computer science.
One of the most important finite groups is the \emph{finite symmetric group} $S_n$, defined as the group whose elements are all the bijections (or permutations) from a set of $n$ elements to itself, with the group operation being function composition.

Classic probabilistic models for finite symmetric groups $S_n$, such as the Plackett-Luce (PL) model \citep{plackett,luce}, the Mallows model \citep{mallows1957non}, and card shuffling methods \citep{diaconis1988group}, are crucial in analyzing preference data and understanding the convergence of random walks.
Therefore, studying probabilistic models over $S_n$ through the lens of modern machine learning is both natural and beneficial. 
This problem is theoretically intriguing as it bridges abstract algebra and machine learning. 
For instance, Cayley's Theorem, a fundamental result in abstract algebra, states that every group is isomorphic to a subgroup of a symmetric group. 
This implies that learning a probability distribution over finite symmetric groups could, in principle, yield a distribution over any finite group. 
Moreover, exploring this problem could lead to the development of advanced models capable of addressing tasks such as permutations in ranking problems, sequence alignment in bioinformatics, and sorting.

However, learning a probability distribution over finite symmetric groups $S_n$ poses significant challenges. 
First, the number of permutations of $n$ objects grows factorially with $n$, making the inference and learning computationally expensive for large $n$.
Second, the discrete nature of the data brings difficulties in designing expressive parameterizations and impedes the gradient-based learning.

In this work, we propose a novel discrete-time discrete (state space) diffusion model over finite symmetric groups, dubbed as \emph{SymmetricDiffusers}. 
It overcomes the above challenges by decomposing the difficult problem of learning a complicated distribution over $S_n$ into a sequence of simpler problems, \ie, learning individual transitions of a reverse diffusion process using deep neural networks.
Based on the theory of random walks on finite groups, we investigate various shuffling methods as the forward process and identify the riffle shuffle as the most effective.
We also provide empirical guidelines on choosing the diffusion length based on the mixing time of the riffle shuffle. 
Furthermore, we examine potential transitions for the reverse diffusion, such as inverse shuffling methods and the PL distribution, and introduce a novel generalized PL distribution. 
We prove that our generalized PL is more expressive than the PL distribution. 
Additionally, we propose a theoretically grounded "denoising schedule" that merges reverse steps to improve the efficiency of sampling and learning.
To validate the effectiveness of our SymmetricDiffusers, we conduct extensive experiments on three tasks: sorting 4-Digit MNIST images, solving Jigsaw Puzzles on the Noisy MNIST and CIFAR-10 datasets, and addressing traveling salesman problems (TSPs). 
Our model achieves the state-of-the-art or comparable performance across all tasks.
\section{Related Works}

\textbf{Random Walks on Finite Groups.} 
The field of random walks on finite groups, especially finite symmetric groups, have been extensively studied by previous mathematicians \citep{ Rshuffling,GSshuffling,BayerDiaconis,RandomWalksOnFiniteGroups}. Techniques from a variety of different fields, including probability, combinatorics, and representation theory, have been used to study random walks on finite groups \citep{RandomWalksOnFiniteGroups}.
In particular, random walks on finite symmetric groups are first studied in the application of card shuffling, with many profound theoretical results of shuffling established.
A famous result in the field shows that 7 riffle shuffles are enough to mix up a deck of 52 cards \citep{BayerDiaconis}, where a riffle shuffle is a mathematically precise model that simulates how people shuffle cards in real life.
The idea of shuffling to mix up a deck of cards aligns naturally with the idea of diffusion, and we seek to fuse the modern techniques of diffusion models with the classical theories of random walks on finite groups.

\textbf{Diffusion Models.}
Diffusion models \citep{dickstein2015deep, song2020generative, ho2020denoising, song2021scorebased} are a powerful class of generative models that typically deals with continuous data. 
They consist of forward and reverse processes.
The forward process is typically a discrete-time continuous-state Markov chain or a continuous-time continuous-state Markov process that gradually adds noise to data, and the reverse process learn neural networks to denoise.
Discrete (state space) diffusion models have also been proposed to handle discrete data like image, text \citep{austin2023structured}, and graphs \citep{vignac2023digress}.
However, existing discrete diffusion models focused on cases where the state space is small or has a special (\eg, decomposable) structure and are unable to deal with intractable-sized state spaces like the symmetric group.
In particular, \cite{austin2023structured} requires an explicit transition matrix, which has size $n!\times n!$ in the case of finite symmetric groups and has no simple representations or sparsifications.
Finally, other recent advancement includes efficient discrete transitions for sequences \citep{varma2024glaubergenerativemodeldiscrete}, continuous-time discrete-state diffusion models \citep{campbell2022continuoustimeframeworkdiscrete,sun2023scorebasedcontinuoustimediscretediffusion,shi2024simplifiedgeneralizedmaskeddiffusion} and discrete score matching models \citep{meng2023concretescorematchinggeneralized,lou2024discretediffusionmodelingestimating}, but the nature of symmetric groups again makes it non-trivial to adapt to these existing frameworks.

\textbf{Differentiable Sorting and Learning Permutations.}
A popular paradigm to learn permutations is through differentiable sorting or matching algorithms.
Various differentiable sorting algorithms have been proposed that uses continuous relaxations of permutation matrices \citep{grover2018stochastic,cuturi2019differentiable,blondel2020fast}, or uses differentiable swap functions \citep{petersen2021differentiable,petersen2022monotonic,kim2023generalized}.
The Gumbel-Sinkhorn method \citep{mena2018learning} has also been proposed to learn latent permutations using the continuous Sinkhorn operator.
Such methods often focus on finding the optimal permutation instead of learning a distribution over the finite symmetric group.
Moreover, they tend to be less effective as $n$ grows larger due to their high complexities.

\vspace{-0.1cm}
\section{Learning Diffusion Models on Finite Symmetric Groups}
\vspace{-0.1cm}

We first introduce some notations.
Fix $n\in\N$. Let $[n]$ denote the set $\{1,2,\ldots,n\}$. A \textit{permutation} $\sigma$ on $[n]$ is a function from $[n]$ to $[n]$, and we usually write $\sigma$ as
$\rndm{1&2&\cdots&n \\ \sigma(1)&\sigma(2)&\cdots&\sigma(n)}$.
The \textit{identity permutation}, denoted by $\Id$, is the permutation given by $\Id(i)=i$ for all $i\in[n]$.
Let $S_n$ be the set of all permutations (or bijections) from a set of $n$ elements to itself, called the \textit{finite symmetric group}, whose group operation is the function composition.
For a permutation $\sigma\in S_n$, the permutation matrix $Q_{\sigma}\in\R^{n\times n}$ associated with $\sigma$ satisfies $e_i^\top Q_{\sigma}=e_{\sigma(i)}^\top$ for all $i\in[n]$.
In this paper, we consider a set of $n$ distinctive objects $\mathcal{X}=\{\b{x}_1,\ldots,\b{x}_n\}$, where the $i$-th object is represented by a $d$-dimensional vector $\b{x}_i$.
Therefore, a ranked list of objects can be represented as a matrix $X=[\b{x}_1,\ldots,\b{x}_n]^\top\in\R^{n\times d}$, where the ordering of rows corresponds to the ordering of objects.
We can permute $X$ via permutation $\sigma$ to obtain $Q_{\sigma}X$.

Our goal is to learn a distribution over $S_n$. 
We propose learning discrete (state space) diffusion models, which consist of a \textit{forward process} and a \textit{reverse process}.
In the forward process, starting from the unknown data distribution, we simulate a random walk until it reaches a known stationary ``noise'' distribution.
In the reverse process, starting from the known noise distribution, we simulate another random walk, where the transition probability is computed using a neural network, until it recovers the data distribution.
Learning a transition distribution over $S_n$ is often more manageable than learning the original distribution because: (1) the support size (the number of states that can be reached in one transition) could be much smaller than $n!$, and (2) the distance between the initial and target distributions is smaller.
By doing so, we break down the hard problem (learning the original distribution) into a sequence of simpler subproblems (learning the transition distribution).
The overall framework is illustrated in Fig. \ref{fig:model}.
In the following, we will introduce the forward card shuffling process in Section \ref{subsec:forward_process}, the reverse process in Section \ref{subsec:reverse_process}, the network architecture and training in Section \ref{subsec:architecture_training}, denoising schedule in Section \ref{subsec:denoising_schedule}, and reverse decoding methods in Section \ref{subsec:reverse_decoding}.

\begin{figure}[t!]
    \vspace{-1.1cm}
    \centering
    \label{fig: forward_reverse_process}
    \includegraphics[width=0.9\linewidth]{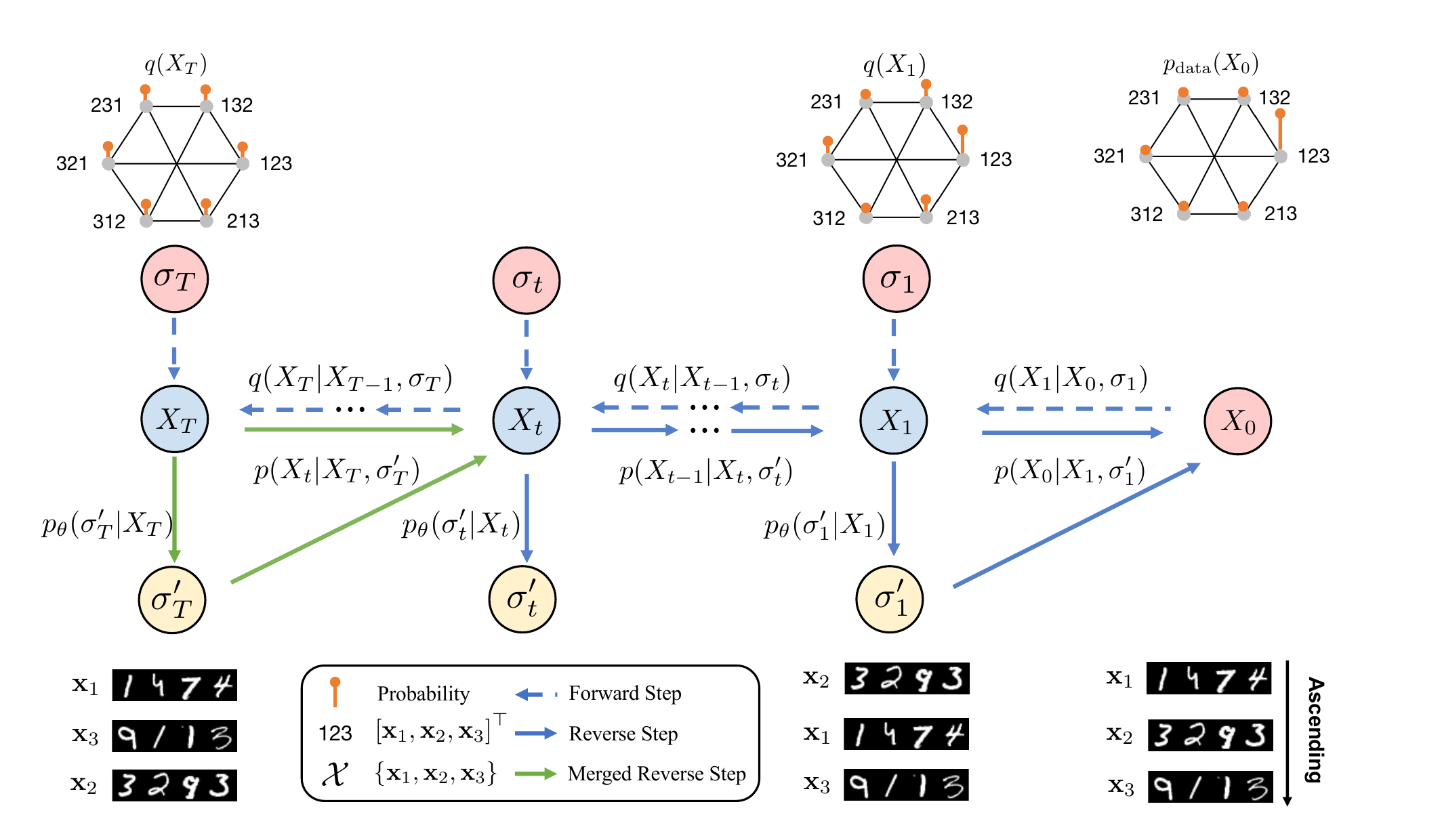}
    \vspace{-0.1cm}
    \caption{This figure illustrates our discrete diffusion model on finite symmetric groups. The middle graphical model displays the forward and reverse diffusion processes. We demonstrate learning distributions over the symmetric group $S_3$ via the task of sorting three MNIST 4-digit images. The top part of the figure shows the marginal distribution of a ranked list of images $X_t$ at time $t$, while the bottom shows a randomly drawn list of images.}
    \label{fig:model}
    \vspace{-0.4cm}
\end{figure}


\subsection{Forward Diffusion Process: Card Shuffling}\label{subsec:forward_process}

Suppose we observe a set of objects $\mathcal{X}$ and their ranked list $X_0$.
They are assumed to be generated from an unknown data distribution in an IID manner, \ie, $X_0, \mathcal{X} \overset{\mathrm{iid}}{\sim} p_{\text{data}}(X, \mathcal{X})$.
One can construct a bijection between a ranked list of $n$ objects and an ordered deck of $n$ cards.
Therefore, permuting objects is equivalent to shuffling cards.
In the forward diffusion process, we would like to add ``random noise'' to the rank list so that it reaches to some known stationary distribution like the uniform.
Formally, we let $\mathcal{S}\subseteq S_n$ be a set of permutations that are realizable by a given shuffling method in one step. 
$\mathcal{S}$ does not change across steps in common shuffling methods.
We will provide concrete examples later.
We then define the \textit{forward process} as a Markov chain,
\begin{equation}\label{eq:forward_process}
    q(X_{1:T} \vert X_0, \mathcal{X}) = q(X_{1:T} \vert X_0) = \prod\nolimits_{t=1}^{T} q(X_t \vert X_{t-1}),
\end{equation}
where $q(X_t \vert X_{t-1}) = \sum_{\sigma_t\in \mathcal{S}}q(X_t|X_{t-1},\sigma_t)q(\sigma_t)$ and the first equality in Eq. (\ref{eq:forward_process}) holds since $X_0$ implies $\mathcal{X}$.
In the forward process, although the set $\mathcal{X}$ does not change, the rank list of objects $X_t$ changes.
Here $q(\sigma_t)$ has the support $\mathcal{S}$ and describes the permutation generated by the underlying shuffling method.
Note that common shuffling methods are time-homogeneous Markov chains, \ie, $q(\sigma_t)$ stays the same across time.
$q(X_t|X_{t-1},\sigma_t)$ is a delta distribution $\delta\lrp{X_t=Q_{\sigma_t}X_{t-1}}$ since the permuted objects $X_{t}$ are uniquely determined given the permutation $\sigma_t$ and $X_{t-1}$.
We denote the \textit{neighbouring states} of $X$ via one-step shuffling as $N_{\mathcal{S}}(X):=\{Q_{\sigma}X \vert \sigma\in \mathcal{S}\}$.
Therefore, we have,
\begin{equation}\label{forward}
q(X_t \vert X_{t-1} )=\begin{cases}
    q(\sigma_t) & \text{if } X_t \in N_{\mathcal{S}}(X_{t-1})  \\
    0 & \text{otherwise.} 
\end{cases}
\end{equation}
Note that $X_t \in N_{\mathcal{S}}(X_{t-1})$ is equivalent to $\sigma_t \in \mathcal{S}$ and $X_t=Q_{\sigma_t}X_{t-1}$.

\subsubsection{Card Shuffling Methods}

We now consider several popular shuffling methods as the forward transition, \ie, \textit{random transpositions}, \textit{random insertions}, and \textit{riffle shuffles}. 
Different shuffling methods provide different design choices of $q(\sigma_t)$, thus corresponding to different forward diffusion processes.
Although all these forward diffusion processes share the same stationary distribution, \ie, the uniform, they differ in their mixing time.
We will introduce stronger quantitative results on their mixing time later.

\textbf{Random Transpositions.}
One natural way of shuffling is to swap pairs of objects.
Formally, a \textit{transposition} or a \textit{swap} is a permutation $\sigma\in S_n$ such that there exist $i\neq j\in[n]$ with $\sigma(i)=j$, $\sigma(j)=i$, and $\sigma(k)=k$ for all $k\notin\{i,j\}$, in which case we denote $\sigma=\rndm{i & j}$.
We let $\mathcal{S}=\left\{\rndm{i & j}:i\neq j\in [n]\right\}\cup\{\Id\}$.
For any time $t$, we define $q(\sigma_t)$ by choosing two indices from $[n]$ uniformly and independently and swap the two indices.
If the two chosen indices are the same, then this means that we have sampled the identity permutation.
Specifically, $q(\sigma_t = \rndm{i & j}) = 2/n^2$ when $i\neq j$ and $q(\sigma_t = \Id) = 1/n$.

\textbf{Random Insertions.}
Another shuffling method is to insert the last piece to somewhere in the middle.
Let $\texttt{insert}_i$ denote the permutation that inserts the last piece right before the $i\th$ piece, and let $\mathcal{S}:=\{\texttt{insert}_i:i\in[n]\}$.
Note that $\texttt{insert}_n=\Id$.
Specifically, we have $q(\sigma_t = \texttt{insert}_i) = 1/n$ when $i\neq n$ and $q(\sigma_t = \Id) = 1/n$.

\textbf{Riffle Shuffles.}
Finally, we introduce the riffle shuffle, a method similar to how serious card players shuffle cards.
The process begins by roughly cutting the deck into two halves and then interleaving the two halves together.
A formal mathematical model of the riffle shuffle, known as the \textit{GSR model}, was introduced by Gilbert and Shannon \citep{GSshuffling}, and independently by \cite{Rshuffling}.
The model is described as follows. 
A deck of $n$ cards is cut into two piles according to binomial distribution, where the probability of having $k$ cards in the top pile is $\binom{n}{k}/2^n$ for $0\leq k\leq n$. 
The top pile is held in the left hand and the bottom pile in the right hand. 
The two piles are then riffled together such that, if there are $A$ cards left in the left hand and $B$ cards in the right hand, the probability that the next card drops from the left is $A/(A+B)$, and from right is $B/(A+B)$. 
We implement the riffle shuffles according to the GSR model.
For simplicity, we will omit the term ``GSR'' when referring to riffle shuffles hereafter.

There exists an exact formula for the probability over $S_n$ obtained through one-step riffle shuffle.
Let $\sigma\in S_n$. 
A \textit{rising sequence} of $\sigma$ is a subsequence of $\sigma$ constructed by finding a maximal subset of indices $i_1 < i_2 <\cdots < i_j$ such that permuted values are contiguously increasing, \ie, $\sigma(i_2) - \sigma(i_1) = \sigma(i_3) - \sigma(i_2) = \cdots = \sigma(i_{j}) - \sigma(i_{j-1}) = 1$.
For example, the permutation 
$\left(
\begin{array}{ccccc}
1 & 2 & 3 & 4 & 5  \\
\textcolor{red}{1} & \textcolor{blue}{4} & \textcolor{red}{2} & \textcolor{blue}{5} & \textcolor{red}{3}
\end{array}
\right)$
has 2 rising sequences, \ie, 123 (red) and 45 (blue).
Note that a permutation has 1 rising sequence if and only if it is the identity permutation.
Denoting by $q_{\mathrm{RS}}(\sigma)$ the probability of obtaining $\sigma$ through one-step riffle shuffle, it was shown by \citet{BayerDiaconis} that 
\begin{equation}
    q_{\mathrm{RS}}(\sigma)=\frac{1}{2^{n}} \binom{n+2-r}{n}=\begin{cases}
    (n+1)/2^n & \text{if } \sigma=\Id  \\
    1/2^n & \text{if } \sigma \text{ has two rising sequences} \\
    0 & \text{otherwise,}
\end{cases}
\end{equation}
where $r$ is the number of rising sequences of $\sigma$.
The support $\mathcal{S}$ is thus the set of all permutations with at most two rising sequences.
We let the forward process be $q(\sigma_t)=q_{\mathrm{RS}}(\sigma_t)$ for all $t$.

\subsubsection{Mixing Times and Cut-off Phenomenon}\label{subsubsec:mixing_times}

All of the above shuffling methods have the uniform distribution as the stationary distribution. 
However, they have different mixing times (\ie, the time until the Markov chain is close to its stationary distribution measured by some distance), and there exist quantitative results on their mixing times.
Let $q\in\{q_{\mathrm{RT}},q_{\mathrm{RI}},q_\mathrm{RS}\}$, and for $t\in\N$, let $q^{(t)}$ be the marginal distribution of the Markov chain after $t$ shuffles.
We describe the mixing time in terms of the total variation (TV) distance between two probability distributions, \ie, $D_{\mathrm{TV}}({q^{(t)}, u})$, where $u$ is the uniform distribution.

For all three shuffling methods, there exists a \textit{cut-off phenomenon}, where $D_{\mathrm{TV}}({q^{(t)}, u})$ stays around 1 for initial steps and then abruptly drops to values that are close to 0.
The \textit{cut-off time} is the time when the abrupt change happens.
For the formal definition, we refer the readers to Definition 3.3 of \cite{RandomWalksOnFiniteGroups}.
In \cite{RandomWalksOnFiniteGroups}, they also provided the cut-off time for random transposition, random insertion, and riffle shuffle, which are $\frac{n}{2}\log n$, $n\log n$, and $\frac32 \log_2n$ respectively.
Observe that the riffle shuffle reaches the cut-off much faster than the other two methods, which means it has a much faster mixing time.
Therefore, we use the riffle shuffle in the forward process.

\subsection{The Reverse Diffusion Process}\label{subsec:reverse_process}

We now model the \textit{reverse process} as another Markov chain conditioned on the set of objects $\mathcal{X}$.
We denote the set of realizable \textit{reverse permutations} as $\mathcal{T}$, and the neighbours of $X$ with respect to $\mathcal{T}$ as $N_{\mathcal{T}}(X):=\{Q_{\sigma}X:\sigma\in \mathcal{T}\}$.
The conditional joint distribution is given by
\begin{equation}
    p_{\theta}(X_{0:T} \vert \mathcal{X}) = p(X_T \vert \mathcal{X}) \prod\nolimits_{t=1}^{T}p_{\theta}(X_{t-1}|X_t),
\end{equation}
where $p_{\theta}(X_{t-1}|X_t)=\sum_{\sigma_t^{\prime} \in \mathcal{T}}p(X_{t-1} \vert X_t, \sigma_t^{\prime}) p_{\theta}(\sigma_t^{\prime} \vert X_t)$.
To sample from $p(X_T \vert \mathcal{X})$, one simply samples a random permutation from the uniform distribution and then shuffle the objects accordingly to obtain $X_T$. 
$p(X_{t-1} \vert X_t, \sigma_t^{\prime})$ is again a delta distribution $\delta(X_{t-1}=Q_{\sigma_t^{\prime}} X_t)$.
We have
\begin{equation}
    p_{\theta}(X_{t-1}|X_t)=\begin{cases}
    p_{\theta}\lrp{\sigma_t^{\prime} \vert X_t} & \text{if } X_{t-1}\in N_{\mathcal{T}}(X_{t}) \\
    0 & \text{otherwise,} 
\end{cases}
\end{equation}
where $X_{t-1} \in N_{\mathcal{T}}(X_{t})$ is equivalent to $\sigma_t^{\prime} \in \mathcal{T}$ and $X_{t-1}=Q_{\sigma_t^{\prime}}X_{t}$.
In the following, we will introduce the specific design choices of the distribution $p_{\theta}(\sigma_t^{\prime}|X_t)$.


\subsubsection{Inverse Card Shuffling}\label{subsubsec:inverse_shuffle}

A natural choice is to use the inverse operations of the aforementioned card shuffling operations in the forward process.
Specifically, for the forward shuffling $\mathcal{S}$, we introduce their inverse operations $\mathcal{T}:=\{\sigma^{-1}:\sigma\in \mathcal{S}\}$, from which we can parameterize $p_{\theta}(\sigma_t^{\prime}|X_t)$.

\textbf{Inverse Transposition.}
Since the inverse of a transposition is also a transposition, we can let $\mathcal{T}:=\mathcal{S}=\left\{\rndm{i & j}:i\neq j\in [n]\right\}\cup\{\Id\}$.
We define a distribution of inverse transposition (IT) over $\mathcal{T}$ using $n+1$ real-valued parameters $\b{s}=(s_1,\ldots,s_n)$ and $\tau$ such that 
{\small
\begin{equation}
    p_{\mathrm{IT}}(\sigma) = 
    \begin{cases}
        1 - \phi(\tau) & \text{if } \sigma=\Id,  \\
        \phi(\tau)\Bigg( \dfrac{\exp(s_i)}{\sum\limits_{k}\exp(s_k)} \cdot 
        \dfrac{\exp{(s_j)}}{\sum\limits_{k\neq i}\exp(s_k)}  +\,\dfrac{\exp(s_j)}{\sum\limits_{k}\exp(s_k)} \cdot 
        \dfrac{\exp{(s_i)}}{\sum\limits_{k\neq j}\exp(s_k)} \Bigg) & 
        \text{if } \sigma = \rndm{i & j}, \, i \neq j,
    \end{cases}
\end{equation}}%
where $\phi(\cdot)$ is the sigmoid function.
The intuition behind this parameterization is to first handle the identity permutation $\Id$ separately, where we use $\phi(\tau)$ to denote the probability of not selecting $\Id$. 
Afterwards, probabilities are assigned to the transpositions. A transposition is essentially an \textit{unordered} pair of \textit{distinct} indices, so we use $n$ parameters $\mathbf{s}=(s_1,\ldots,s_n)$ to represent the logits of each index getting picked. 
The term in parentheses represents the probability of selecting the unordered pair $i$ and $j$, which is equal to the probability of first picking $i$ and then $j$, plus the probability of first picking $j$ and then $i$.

\textbf{Inverse Insertion.}
For the random insertion, the inverse operation is to insert some piece to the end.
Let $\texttt{inverse\_insert}_i$ denote the permutation that moves the $i\th$ component to the end, and let $\mathcal{T}:=\{\texttt{inverse\_insert}_i:i\in[n]\}$.
We define a categorial distribution of inverse insertion (II) over $\mathcal{T}$ using parameters $\b{s}=(s_1,\ldots,s_n)$ such that, 
\begin{equation}
    p_{\mathrm{II}}(\sigma = \texttt{inverse\_insert}_i) = {\exp(s_i)} / {\left(\ts{\sum\nolimits_{j=1}^{n}\exp(s_j)} \right)}.    
\end{equation}

\textbf{Inverse Riffle Shuffle.}
In the riffle shuffle, the deck of card is first cut into two piles, and the two piles are riffled together.
So to undo a riffle shuffle, we need to figure out which pile each card belongs to, \ie, making a sequence of $n$ binary decisions.
We define the Inverse Riffle Shuffle (IRS) distribution using parameters $\b{s}=(s_1,\ldots,s_n)$ as follows.
Starting from the last (the $n\th$) object, each object $i$ has probability $\phi(s_i)$ of being put on the top of the left pile. 
Otherwise, it falls on the top of the right pile.
Finally, put the left pile on top of the right pile, which gives the shuffled result.

\subsubsection{The Plackett-Luce Distribution and Its Generalization}\label{subsubsec:GPL}

Other than specific inverse shuffling methods to parameterize the reverse process, we also consider general distributions $p_{\theta}(\sigma_t^{\prime} \vert X_t)$ whose support are the whole $S_n$, \ie, $\mathcal{T}=S_n$.

\textbf{The PL Distribution.}
A popular distribution over $S_n$ is the Plackett-Luce (PL) distribution \citep{plackett,luce}, which is constructed from $n$ scores $\b{s} = (s_1,\ldots,s_n)$ as follows, 
\begin{equation}
    p_{\mathrm{PL}}(\sigma) = \prod\nolimits_{i=1}^{n} {\exp\lrp{s_{\sigma(i)}}} / \lrp{\ts{\sum_{j=i}^{n}\exp\lrp{s_{\sigma(j)}}}},
\end{equation}
for all $\sigma\in S_n$.
Intuitively, $(s_1,\ldots,s_n)$ represents the preference given to each index in $[n]$.
To sample from $\mathrm{PL}_{\b{s}}$, we first sample $\sigma(1)$ from $\mathrm{Cat}(n, \mathrm{softmax}(\b{s}))$. 
Then we remove $\sigma(1)$ from the list and sample $\sigma(2)$ from the categorical distribution corresponding to the rest of the scores (logits). 
We continue in this manner until we have sampled $\sigma(1),\ldots,\sigma(n)$.
By \cite{PL}, the mode of the PL distribution is the permutation that sorts $\b{s}$ in descending order.
However, the PL distribution is not very expressive. In particular, we have the following result, and the proof is given in Appendix \ref{appendix:expressiveness}.
\begin{restatable}{proposition}{pldelta}
\label{propn:PL-delta}
The PL distribution cannot represent a delta distribution over $S_n$. 
\end{restatable}

\textbf{The Generalized PL (GPL) Distribution.}
We then propose a generalization of the PL distribution, referred to as \textit{Generalized Plackett-Luce (GPL) Distribution}.
Unlike the PL distribution, which uses a set of $n$ scores, the GPL distribution uses $n^2$ scores $\{\b{s}_1, \cdots, \b{s}_n\}$, where each $\b{s}_i = \{s_{i,1}, \dots, s_{i,n}\}$ consists of $n$ scores. 
The GPL distribution is constructed as follows,
\begin{equation}\label{eq:GPL_distribution}
    p_{\mathrm{GPL}}(\sigma):=\prod\nolimits_{i=1}^{n} {\exp\lrp{s_{i,\sigma(i)}}} / \lrp{\ts{\sum_{j=i}^{n}\exp\lrp{s_{i,\sigma(j)}}}}.
\end{equation}
Sampling of the GPL distribution begins with sampling $\sigma(1)$ using $n$ scores $\b{s}_1$.
For $2\leq i\leq n$, we remove $i-1$ scores from $\b{s}_i$ that correspond to $\sigma(1),\ldots,\sigma(i-1)$ and sample $\sigma(i)$ from a categorical distribution constructed from the remaining $n-i+1$ scores in $\b{s}_i$.
It is important to note that the family of PL distributions is a strict subset of the GPL family.
Since the GPL distribution has more parameters than the PL distribution, it is expected to be more expressive.
In fact, we prove the following significant result, and the proof is given in Appendix \ref{appendix:expressiveness}.
\begin{restatable}{theorem}{gplexpr}
\label{thm:GPL-expressiveness-main-paper}
The reverse diffusion process parameterized using the GPL distribution in Eq. (\ref{eq:GPL_distribution}) can model any distribution over $S_n$.
\end{restatable}

\subsection{Network Architecture and Training}\label{subsec:architecture_training}

We now briefly introduce how to use neural networks to parameterize the above distributions used in the reverse process.
At any time $t$, given $X_t \in \R^{n\times d}$, we use a neural network with parameters $\theta$ to construct $p_{\theta}(\sigma_t^{\prime} \vert X_t)$.
In particular, we treat $n$ rows of $X_t$ as $n$ tokens and use a Transformer architecture along with the time embedding of $t$ and the positional encoding to predict the previously mentioned scores. 
For example, for the GPL distribution, to predict $n^2$ scores, we introduce $n$ dummy tokens that correspond to the $n$ permuted output positions.
We then perform a few layers of masked self-attention ($2n \times 2n$) to obtain the token embedding $Z_1 \in \R^{n\times d_{\text{model}}}$ corresponding to $n$ input tokens and $Z_2\in\R^{n\times d_{\text{model}}}$ corresponding to $n$ dummy tokens.
Finally, the GPL score matrix is obtained as $S_{\theta}=Z_1Z_2^\top\in\R^{n\times n}$.
Since the aforementioned distributions have different numbers of scores, the specific architectures of the Transformer differ.
We provide more details in Appendix \ref{appendix:network_architecture}.

To learn the diffusion model, we maximize the following variational lower bound:
{\small
\begin{align}
    \label{loss-no-schedule}
    \E{p_{\text{data}}(X_0, \mathcal{X})}\Bb{\log p_{\theta}(X_0 \vert \mathcal{X})} \geq \E{p_{\text{data}}(X_0, \mathcal{X})q(X_{1:T} \vert X_0, \mathcal{X})} \lrb{\log p(X_T \vert \mathcal{X}) + \sum_{t=1}^{T}\log\frac{p_{\theta}(X_{t-1}|X_t)}{q(X_t|X_{t-1})}}.    
\end{align}}%
In practice, one can draw samples to obtain the Monte Carlo estimation of the lower bound.
Due to the complexity of shuffling transition in the forward process, we can not obtain $q(X_t \vert X_{0})$ analytically, as is done in common diffusion models. 
Therefore, we have to run the forward process to collect samples.
Fortunately, it is efficient as the forward process only involves shuffling integers. 
We include more training details in Appendix \ref{appendix:exp_details}.

Note that most existing diffusion models, such as those proposed by \citet{ho2020denoising} and \citet{austin2023structured}, use an equivalent form of the above variational bound, which involves the analytical KL divergence between the posterior $q(X_{t-1}|X_t, X_0)$ and $p_{\theta}(X_{t-1}|X_t)$ for variance control. However, this variational bound cannot be applied to $S_n$ because the transitions are not Gaussian, and $q(X_{t-1}|X_t, X_0)$ is generally unavailable for most shuffling methods.
Most existing diffusion models also sample a random timestep of the loss. While this technique is also available in our framework, it introduces more variance and does not improve efficiency all the time.
And for riffle shuffles, the trajectory is usually short enough that we can compute the loss on the whole trajectory.
A detailed discussion can be found in Appendix \ref{appendix:different-loss-forms}.

\subsection{Denoising Schedule via Merging Reverse Steps}\label{subsec:denoising_schedule}

If one merges some steps in the reverse process, sampling and learning would be faster and more memory efficient. 
The variance of the training loss could also be reduced.
Specifically, at time $t$ of the reverse process, instead of predicting $p_{\theta}(X_{t-1}|X_t)$, we can predict $p_{\theta}(X_{t'}|X_t)$ for any $0\leq t'<t$.
Given a sequence of timesteps $0=t_0<\cdots<t_k=T$, we can now model the reverse process as 
\begin{equation}
    \label{reverse-with-schedule}
    p_{\theta}(X_{t_0},\ldots,X_{t_k} \vert \mathcal{X}) = p(X_T \vert \mathcal{X})\prod\nolimits_{i=1}^{k}p_{\theta}(X_{t_{i-1}}|X_{t_i}).
\end{equation}
To align with the literature of diffusion models, we call the list $[t_0,\ldots,t_k]$ the \textit{denoising schedule}.
After incorporating the denoising schedule in Eq. \eqref{loss-no-schedule}, we obtain the loss function:
{\small
\begin{equation}
    \label{loss}
    \mathcal{L}(\theta) = \E{p_{\text{data}}(X_0, \mathcal{X})}\E{q(X_{1:T} \vert X_0, \mathcal{X})}\lrb{-\log p(X_T \vert \mathcal{X})-\sum_{i=1}^{k}\log\frac{p_{\theta}(X_{t_{i-1}}|X_{t_i})}{q(X_{t_i}|X_{t_{i-1}})}}.
\end{equation}}%
Note that although we may not have the analytical form of $q(X_{t_i}|X_{t_{i-1}})$, we can draw samples from it.
Merging is feasible if the support of $p_{\theta}(X_{t_{i-1}}|X_{t_i})$ is equal or larger than the support of $q(X_{t_i}|X_{t_{i-1}})$; otherwise, the inverse of some forward permutations would be almost surely unrecoverable. 
Therefore, we can implement a non-trivial denoising schedule (\ie, $k < T$), when $p_{\theta}(\sigma_t^{\prime} \vert X_t)$ follows the PL or GPL distribution, as they have whole $S_n$ as their support.
However, merging is not possible for inverse shuffling methods, as their support is smaller than that of the corresponding multi-step forward shuffling.
To design a successful denoising schedule, we first describe the intuitive principles and then provide some theoretical insights. 
1) The length of forward diffusion $T$ should be minimal so long as the forward process approaches the uniform distribution.
2) If distributions of $X_{t}$ and $X_{t+1}$ are similar, we should merge these two steps. Otherwise, we should not merge them, as it would make the learning problem harder.

To quantify the similarity between distributions shown in 1) and 2), the TV distance is commonly used in the literature.
In particular, we can measure $D_{\mathrm{TV}}\bp{q^{(t)}, q^{(t^{\prime})}}$ for $t\neq t^{\prime}$ and $D_{\mathrm{TV}}\bp{q^{(t)}, u}$, where $q^{(t)}$ is the distribution at time $t$ in the forward process and $u$ is the uniform distribution.
For riffle shuffles, the total variation distance can be computed exactly.
Specifically, we first introduce the \textit{Eulerian Numbers} $A_{n,r}$ \citep{eulerian}, \ie, the number of permutations in $S_n$ that have exactly $r$ rising sequences where $1\leq r\leq n$. 
$A_{n,r}$ can be computed using the following recursive formula $A_{n,r} = r A_{n-1,r} + (n-r+1) A_{n-1, r-1}$ where $A_{1,1}=1$.
We then have the following result. The proof is given in Appendix \ref{appendix:tv-proof}.
\begin{restatable}{proposition}{tvbetween}\label{prop:tvbetween}
Let $t\neq t'$ be positive integers. Then 
\begin{equation}
    \label{tvbetweenrs}
    D_{\mathrm{TV}}\left( q^{(t)}_{\mathrm{RS}}, q^{(t^{\prime})}_{\mathrm{RS}} \right) = \frac12\sum_{r=1}^{n} A_{n,r} \left|\frac{1}{2^{tn}}\binom{n+2^t-r}{n}-\frac{1}{2^{t^{\prime}n}}\binom{n+2^{t^{\prime}}-r}{n}\right|,
\end{equation}
and 
\begin{equation}
    \label{tvbetweenu}
    D_{\mathrm{TV}}\left( q^{(t)}_{\mathrm{RS}}, u \right) = \frac12\sum_{r=1}^{n} A_{n,r} \left|\frac{1}{2^{tn}}\binom{n+2^t-r}{n}-\frac{1}{n!}\right|.
\end{equation}
\vspace{-0.5cm}
\end{restatable}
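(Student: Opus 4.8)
The plan is to reduce everything to the single-shuffle probability formula already stated, combined with two structural facts: that the probability of obtaining $\sigma$ after $t$ riffle shuffles depends on $\sigma$ only through its number of rising sequences $r$, and that the number of $\sigma\in S_n$ with exactly $r$ rising sequences is the Eulerian number $A_{n,r}$. Once these are in place, the $n!$-term total-variation sum collapses into the $n$-term sums claimed in the proposition.

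First I would establish the $t$-step marginal. A single riffle shuffle is a $2$-shuffle in the Bayer--Diaconis sense, and the key composition property is that performing an $a$-shuffle followed by a $b$-shuffle is distributionally identical to a single $ab$-shuffle; iterating, $t$ successive riffle shuffles are equivalent to one $2^t$-shuffle. The Bayer--Diaconis formula for an $a$-shuffle gives probability $\frac{1}{a^{n}}\binom{n+a-r}{n}$ to any permutation with $r$ rising sequences, which recovers the stated single-shuffle formula at $a=2$. Setting $a=2^{t}$ yields $q^{(t)}_{\mathrm{RS}}(\sigma)=\frac{1}{2^{tn}}\binom{n+2^{t}-r}{n}$, where $r$ is the number of rising sequences of $\sigma$. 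I would either cite this directly from \citep{BayerDiaconis}, or reprove the composition by describing the $a$-shuffle as cutting the deck into $a$ multinomially sized piles and interleaving them uniformly, then checking that two successive cut-and-interleave stages collapse into a single one.

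With the marginal in hand, the remainder is a routine regrouping. By definition, $D_{\mathrm{TV}}(q^{(t)}_{\mathrm{RS}},q^{(t')}_{\mathrm{RS}})=\frac12\sum_{\sigma\in S_n}\abs{q^{(t)}_{\mathrm{RS}}(\sigma)-q^{(t')}_{\mathrm{RS}}(\sigma)}$. Since each summand depends on $\sigma$ only through its rising-sequence count $r$, I would partition $S_n$ into the classes $\{\sigma : \sigma \text{ has exactly } r \text{ rising sequences}\}$ for $r=1,\ldots,n$, each of cardinality $A_{n,r}$, and factor the common value out of each class. This immediately produces Eq. \eqref{tvbetweenrs}. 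For the distance to the uniform distribution $u$, I would use $u(\sigma)=1/n!$ for every $\sigma$ (a constant, and hence in particular constant on each rising-sequence class) and apply the identical partition to obtain Eq. \eqref{tvbetweenu}.

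I expect the main obstacle to be the first step, namely pinning down the $2^{t}$-shuffle identity and its closed form; this is the only genuinely nontrivial input, and everything depends on it. Once the formula $\frac{1}{2^{tn}}\binom{n+2^{t}-r}{n}$ is accepted, the collapse of the full sum over $S_n$ into a sum over $r$ weighted by Eulerian numbers is immediate and requires no estimation or further casework.
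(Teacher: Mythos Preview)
Your proposal is correct and follows essentially the same route as the paper: cite the Bayer--Diaconis $t$-step formula $q^{(t)}_{\mathrm{RS}}(\sigma)=\frac{1}{2^{tn}}\binom{n+2^{t}-r}{n}$, observe that the summand depends only on the rising-sequence count $r$, and regroup the $n!$-term TV sum into $n$ terms weighted by the Eulerian numbers $A_{n,r}$. The only difference is that you spell out the $2^t$-shuffle composition argument whereas the paper simply cites \citep{BayerDiaconis} for the $t$-step marginal directly.
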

Note that Eq. \eqref{tvbetweenu} was originally given by \citet{shihan}.
We restate it here for completeness.
Once the Eulerian numbers are precomputed, the TV distances can be computed in $O(n)$ time instead of $O(n!)$.
Through extensive experiments, we have the following empirical observation.
For the principle 1), choosing $T$ so that $D_{\mathrm{TV}}\bp{{q^{(T)}_{\mathrm{RS}}, u}} \approx 0.005$ yields good results.
For the principle 2), a denoising schedule $[t_0,\ldots,t_k]$ with $D_{\mathrm{TV}}\bp{{q^{(t_i)}_{\mathrm{RS}}, q^{(t_{i+1})}_{\mathrm{RS}}}} \approx 0.3$ for most $i$ works well.
We show an example on sorting $n=100$ four-digit MNIST images in Fig. \ref{fig:denoise_schedule}.

\begin{figure}[t!]
    \vspace{-1cm}
    \centering    
    \begin{minipage}{0.56\textwidth}
        \centering
        \includegraphics[width=\textwidth]{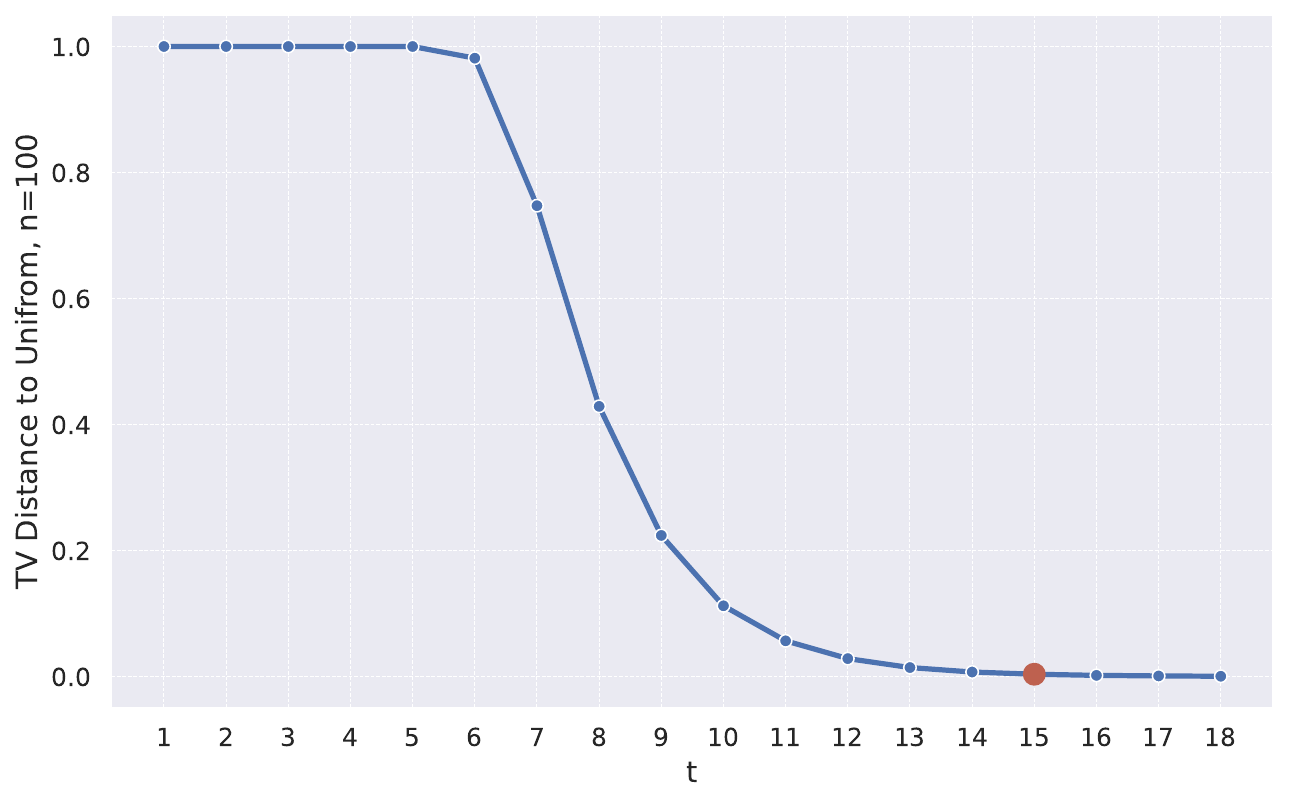}
        (a)
        \label{fig:tv-to-uniform-100}
    \end{minipage}\hfill
    \begin{minipage}{0.435\textwidth}
        \centering
        \includegraphics[width=\textwidth]{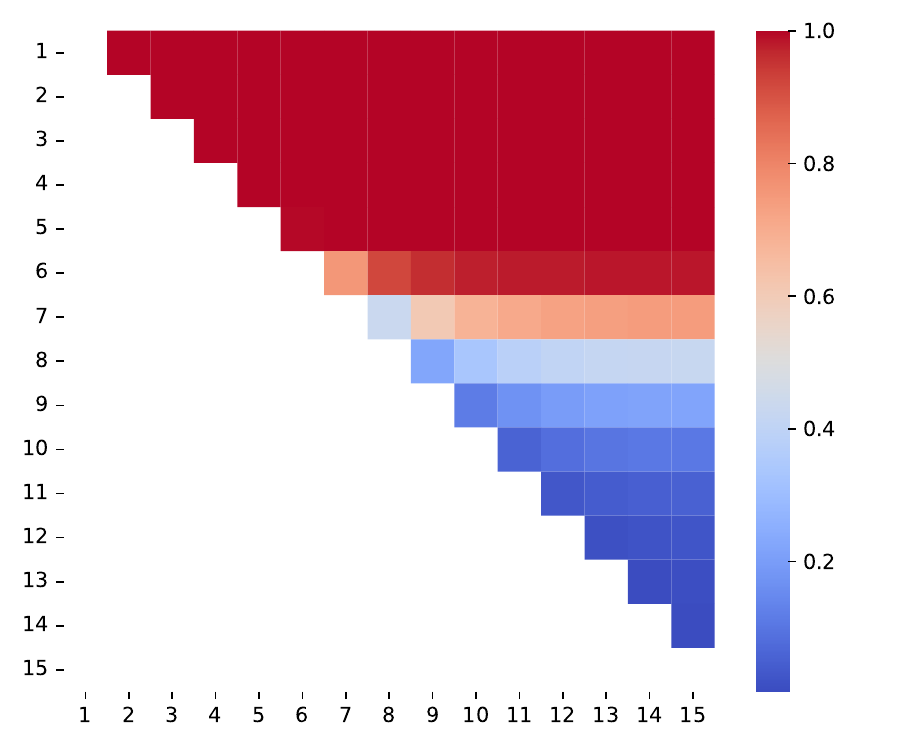}
        (b)
        \label{fig:tv-heatmap}
    \end{minipage}
    \vspace{-0.5cm}
    \caption{(a) $D_{\mathrm{TV}}\bp{q^{(t)}_{\mathrm{RS}}, u}$ computed using Eq. \eqref{tvbetweenu}. We choose $T=15$ (red dot) based on the threshold $0.005$. (b) A heatmap for $D_{\mathrm{TV}}\bp{q^{(t)}_{\mathrm{RS}}, q^{(t^{\prime})}_{\mathrm{RS}}}$ for $n=100$ and $1\leq t<t'\leq 15$, computed using Eq. \eqref{tvbetweenrs}. Rows are $t$ and columns are $t^{\prime}$. We choose the denoising schedule $[0,8,10,15]$.}
    \label{fig:denoise_schedule}
    \vspace{-0.2cm}
\end{figure}

\subsection{Reverse Process Decoding}\label{subsec:reverse_decoding}

We now discuss how to decode predictions from the reverse process at test time.
In practice, one is often interested in the most probable state or a few states with high probabilities under $p_{\theta}(X_0 \vert \mathcal{X})$.
However, since we can only draw samples from $p_{\theta}(X_0 \vert \mathcal{X})$ via running the reverse process, exact decoding is intractable.  
The simplest approximated method is greedy search, \ie, successively finding the mode or an approximated mode of $p_{\theta}(X_{t_{i-1}}|X_{t_i})$.
Another approach is beam search, which maintains a dynamic buffer of $k$ candidates with highest probabilities.
Nevertheless, for one-step reverse transitions like the GPL distribution, even finding the mode is intractable.
To address this, we employ a hierarchical beam search that performs an inner beam search within $n^2$ scores at each step of the outer beam search.
Further details are provided in Appendix \ref{appendix:decoding}.

\section{Experiments}

We now demonstrate the general applicability and effectiveness of our model through solving a variety of tasks, including sorting 4-digit MNIST numbers, jigsaw puzzles, and traveling salesman problems (TSPs). 
Additional details, including an additional synthetic experiment that compares our method with other discrete diffusion models, are provided in Appendix \ref{appendix:exp_details} due to space constraints.

\begin{table}[t!]
\vspace{-1cm}
\centering
\captionsetup{skip=3pt}
\caption{Results (averaged over 5 runs) on solving the jigsaw puzzle on Noisy MNIST and CIFAR10.}
\label{table:jigsaw-diffusion-transposition}
\resizebox{0.9\textwidth}{!}{
\begin{tabular}{@{}llcccccccc@{}}
\toprule
\multirow{2}{*}{\textbf{Method}} & \multirow{2}{*}{\textbf{Metrics}} & \multicolumn{5}{c}{\textbf{Noisy MNIST}} & \multicolumn{3}{c}{\textbf{CIFAR-10}}  \\ 
\cmidrule(lr){3-7} \cmidrule(l){8-10} 
&& $2\times 2$ & $3\times 3$ & $4\times 4$ & $5\times 5$ & $6\times 6$ & $2\times 2$ & $3\times 3$ & $4\times 4$  \\ 
\midrule
\midrule
\multirow{6}{2cm}{Gumbel-Sinkhorn Network \citep{mena2018learning}} 
& Kendall-Tau $\uparrow$ &0.9984  &0.6908  &0.3578  &0.2430  &\bf{0.1755}  &0.8378  &0.5044  &\textbf{0.4016}   \\
& Accuracy (\%) &99.81  &44.65  &00.86  &0.00  &0.00  &76.54  &6.07  &0.21   \\
& Correct (\%) &99.91  &80.20  &49.51  &26.94  &14.91  &86.10  &43.59  &25.31   \\
& RMSE $\downarrow$ &\textbf{0.0022}  &0.1704  &0.4572  &0.8915  &1.0570  &0.3749  &0.9590  &1.0960   \\
& MAE $\downarrow$ &0.0003  &0.0233  &0.1005  &0.3239  &0.4515  &0.1368  &0.5320  &0.6873   \\ 
\midrule
\multirow{6}{2cm}{DiffSort \citep{petersen2022monotonic}} 
& Kendall-Tau $\uparrow$ &0.9931  &0.3054  &0.0374  & 0.0176 & 0.0095  &0.6463  &0.1460  &0.0490   \\
& Accuracy (\%) &99.02  &5.56  &0.00  &0.00  &0.00  &59.18  &0.96  &0.00   \\
& Correct (\%) &99.50  &42.25  &10.77  &6.39  &3.77  &75.48  &27.87  & 12.27  \\
& RMSE $\downarrow$ &0.0689  &1.0746  &1.3290  &1.4883  &1.5478  &0.7389  &1.2691  &1.3876   \\
& MAE $\downarrow$ &0.0030  &0.4283  &0.6531  &0.8204  &0.8899  &0.2800  &0.8123  &0.9737   \\ 
\midrule
\multirow{6}{2cm}{Error-free DiffSort \citep{kim2023generalized}} 
& Kendall-Tau $\uparrow$ &0.9899  &0.2014 &0.0100  &0.0034  &-0.0021  &0.6604  &0.1362  &0.0318   \\
& Accuracy (\%) &98.62  &0.82  &0.00  &0.00  &0.00  &60.96  &0.68  &0.00   \\
& Correct (\%) &99.28  &32.65 &7.40  &4.39  &2.50  &75.99  &26.75  &10.33   \\
& RMSE $\downarrow$ &0.0814  &1.1764  &1.3579  &1.5084  &1.5606  &0.7295  &1.2820  &1.4095   \\
& MAE $\downarrow$ &0.0041  &0.5124  &0.6818  &0.8424  &0.9041  &0.2731  &0.8260  &0.9990   \\ 
\midrule
\multirow{5}{2cm}{Symmetric Diffusers (Ours)} 
& Kendall-Tau $\uparrow$ & \textbf{0.9992} & \textbf{0.8126} & \textbf{0.4859} & \textbf{0.2853} & 0.1208 & \textbf{0.9023} & \textbf{0.8363} & 0.2518  \\
& Accuracy (\%) & \textbf{99.88} & \textbf{57.38} & \textbf{1.38} & 0.00 & 0.00 & \textbf{90.15} & \textbf{70.94} & \textbf{0.64}  \\
& Correct (\%) & \textbf{99.94} & \textbf{86.16} & \textbf{58.51} & \textbf{37.91} & \textbf{18.54} & \textbf{92.99} & \textbf{86.84} & \textbf{34.69}  \\
& RMSE $\downarrow$ & 0.0026 & \textbf{0.0241} & \textbf{0.1002} & \textbf{0.2926} & \textbf{0.4350} & \textbf{0.3248} & \textbf{0.3892} & \textbf{0.8953}  \\ 
& MAE $\downarrow$ & \textbf{0.0001} & \textbf{0.0022} & \textbf{0.0130} & \textbf{0.0749} & \textbf{0.1587} & \textbf{0.0651} & \textbf{0.0977} &  \textbf{0.5044} \\
\bottomrule
\end{tabular}
}
\vspace{-0.3cm}
\end{table}

\begin{table}[t]
\centering
\captionsetup{skip=3pt}
\caption{Results (averaged over $5$ runs) on the four-digit MNIST sorting benchmark.
For $n=200$, due to efficiency reasons, we use PL for the reverse process, and we randomly sample a timestep when computing the loss (see Appendix \ref{appendix:random-timestep}).}
\label{table:sort-MNIST}
\resizebox{0.9\textwidth}{!}{
\begin{tabular}{@{}llccccccccc@{}}
\toprule
\multirow{2}{*}{\textbf{Method}} & \multirow{2}{*}{\textbf{Metrics}} & \multicolumn{9}{c}{\textbf{Sequence Length}}   \\ 
\cmidrule(l){3-11} 
&& 3 & 5 & 7 & 9 & 15 & 32 & 52 & 100 & 200 \\ 
\midrule
\multirow{3}{2cm}{DiffSort \citep{petersen2022monotonic}} & Kendall-Tau $\uparrow$ &0.930  &0.898  &0.864  &0.801  &0.638  &0.535  &0.341  &0.166 &0.107   \\
& Accuracy (\%) &93.8  &83.9  &71.5  &52.2  &10.3  &0.2  &0.0  &0.0 &0.0   \\
& Correct (\%) &95.8  &92.9  &90.1  &85.2  &82.3  &61.8  &42.8  &23.2 &15.3 \\
\midrule
\multirow{3}{2cm}{Error-free DiffSort \citep{kim2023generalized}} & Kendall-Tau $\uparrow$ &0.974  &\bf{0.967}  &\bf{0.962}  &\bf{0.952}  &\bf{0.938}  &\bf{0.879}  &0.170  &0.140 & 0.002  \\
& Accuracy (\%) &97.7  &95.3  &92.9  &89.6  &\bf{83.1}  &\bf{57.1}  &0.0  &0.0  & 0.0 \\
& Correct (\%) & 98.4  &\bf{97.7}  &\bf{97.2}  &\bf{96.3}  &\bf{95.1}  &\bf{90.1}  &24.2  &20.1 &  0.8 \\
\midrule
\multirow{3}{2cm}{Symmetric Diffusers (Ours)}
& Kendall-Tau $\uparrow$ & \textbf{0.976} & \bf{0.967} & 0.959 & 0.950 & 0.932 & 0.858 & \bf{0.786} & \bf{0.641} &\bf{0.453} \\
& Accuracy (\%) & \bf{98.0} & \bf{95.5} & \bf{92.9} & \bf{90.0} & 82.6 & 55.1 & \bf{27.4} & \bf{4.5} & \bf{0.1}  \\
& Correct (\%) & \bf{98.5} & 97.6 & 96.8 & 96.1 & 94.5 & 88.3 & \bf{82.1} & \bf{69.3}  &\bf{52.2} \\
\bottomrule
\end{tabular}
}
\vspace{-0.5cm}
\end{table}

\subsection{Sorting 4-digit MNIST Images}\label{subsec:sorting mnist}

We first evaluate our SymmetricDiffusers on the four-digit MNIST sorting benchmark, a well-established testbed for differentiable sorting \citep{blondel2020fast, cuturi2019differentiable, grover2018stochastic,  kim2023generalized, petersen2021differentiable, petersen2022monotonic}. 
Each four-digit image in this benchmark is obtained by concatenating 4 individual images from MNIST, and our task is to sort $n$ four-digit MNIST numbers.
For evaluation, we employ several metrics to compare methods, including Kendall-Tau coefficient (measuring the correlation between rankings), accuracy (percentage of images perfectly reassembled), and correctness (percentage of pieces that are correctly placed). 

\textbf{Ablation Study.}
We conduct an ablation study to verify our design choices for reverse transition and decoding strategies. 
As shown in Table \ref{table:ablation_decoding_transition}, when using riffle shuffles as the forward process, combining PL with either beam search (BS) or greedy search yields good results in terms of Kendall-Tau and correctness metrics. 
In contrast, the IRS (inverse riffle shuffle) method, along with greedy search, performs poorly across all metrics, showing the limitations of IRS in handling complicated sorting tasks.
At the same time, combining GPL and BS achieves the best accuracy in correctly sorting the entire sequence of images.
Finally, we see that random transpositions (RT) and random insertions (RI) are both out of memory for large instances due to their long mixing time.
Given that accuracy is the most challenging metric to improve, we select riffle shuffles, GPL and BS for all remaining experiments, unless otherwise specified. 
More ablation study (\eg, denoising schedule) is provided in Appendix \ref{appendix:ablation_study}.

\textbf{Full Results.}
From Table \ref{table:sort-MNIST}, we can see that Error-free DiffSort achieves the best performance in sorting sequences with lengths up to 32. 
However, its performance declines considerably with longer sequences (e.g., those exceeding 52 in length). 
Meanwhile, DiffSort performs the worst due to the error accumulation of its soft differentiable swap function \citep{kim2023generalized, petersen2021differentiable}.
In contrast, our method is on par with Error-free DiffSort in sorting short sequences and significantly outperforms others on long sequences.

\subsection{Jigsaw Puzzle}\label{subsec: jigsaw puzzles}

We then explore image reassembly from segmented "jigsaw" puzzles \citep{mena2018learning, noroozi2016unsupervised, santa2017deeppermnet}. 
We evaluate the performance using the MNIST and the CIFAR10 datasets, which comprises puzzles of up to $6\times6$ and $4\times4$ pieces respectively. 
We add slight noise to pieces from the MNIST dataset to ensure background pieces are distinctive.
To evaluate our models, we use Kendall-Tau coefficient, accuracy, correctness, RMSE (root mean square error of reassembled images), and MAE (mean absolute error) as metrics. 

Table \ref{table:jigsaw-diffusion-transposition} presents results comparing our method with the Gumbel-Sinkhorn Network \citep{mena2018learning}, Diffsort \citep{petersen2022monotonic}, and Error-free Diffsort \citep{kim2023generalized}.
DiffSort and Error-free DiffSort are primarily designed for sorting high-dimensional ordinal data which have clearly different patterns.
Since jigsaw puzzles on MNIST and CIFAR10 contain pieces that are visually similar, these methods do not perform well. 
The Gumbel-Sinkhorn performs better for tasks involving fewer than $4\times4$ pieces. 
In more challenging scenarios (\eg, $5\times 5$ and $6\times 6$), our method significantly outperforms all competitors.



\begin{table}[t]
\vspace{-1cm}
\centering
\captionsetup{skip=3pt}
\caption{Ablation study on transitions of reverse diffusion and decoding strategies. 
Results are averaged over three runs on sorting 52 four-digit MNIST images. 
GPL: generalized Plackett-Luce; IRS: inverse riffle shuffle; RT: random transposition; IT: inverse transposition; RI: random insertion; II: inverse insertion.
}
\label{table:ablation_decoding_transition}
\resizebox{0.9\textwidth}{!}{
\begin{tabular}{lccccccc}
\toprule
Forward & \multicolumn{5}{c}{Riffle Shuffles} & RT & RI \\
\cmidrule(lr){1-1} \cmidrule(lr){2-6} \cmidrule(lr){7-7} \cmidrule(lr){8-8}
Reverse & GPL + BS & GPL + Greedy & PL + Greedy & PL + BS & IRS + Greedy & IT + Greedy  & II + Greedy \\
\midrule
Kendall-Tau $\uparrow$ & 0.786 & \bf{0.799} & \bf{0.799} & 0.797 & 0.390 & \multicolumn{2}{c}{\multirow{3}{*}{Out of Memory}} \\
Accuracy (\%) & \bf{27.4} & 24.4 & 26.4 & 26.4 & 0.6 \\
Correct (\%) & 82.1 & 81.6 & \bf{83.3} & 83.1 & 44.6 \\
\bottomrule
\end{tabular}
}
\end{table}

\subsection{The Travelling Salesman Problem}

At last, we explore the travelling salesman problem (TSP) to demonstrate the general applicability of our model.
TSPs are classical NP-complete combinatorial optimization problems which are solved using integer programming or heuristic solvers \citep{arora1998polynomial,gonzalez2007handbook}.
There exists a vast literature on learning-based models to solve TSPs
\citep{kipf2017semisupervised,kool2019attention,joshi2019efficient,joshi2021learning,bresson2021transformer,kwon2021pomo,fu2021generalize,qiu2022dimes,kim2023symnco,sun2023difusco,min2024unsupervised,sanokowski2024diffusionmodelframeworkunsupervised}.
They often focus on the Euclidean TSPs, which are formulated as follows. 
Let $V=\{v_1,\ldots,v_n\}$ be points in $\R^2$.
We need to find some $\sigma\in S_n$ such that $\sum_{i=1}^{n}\|v_{\sigma(i)}-v_{\sigma(i+1)}\|_2$ is minimized, where we let $\sigma(n+1):=\sigma(1)$.


We compare with operations research (OR) solvers and other learning based approaches on TSP instances with 20 nodes and 50 nodes.
The metrics are the total tour length and the optimality gap. 
Given the ground truth (GT) length produced by the best OR solver, the optimality gap is given by $\bp{\text{predicted length}-(\text{GT length})}/(\text{GT length})$.
As shown in Table \ref{table:tsp-20}, SymmetricDiffusers achieves comparable results with both OR solvers and the state-of-the-art learning-based methods.
Further experiment details are provided in Appendix \ref{appendix:exp_details}. 


\begin{table}[t]
\centering
\captionsetup{skip=3pt}
\caption{Results on TSP-20 and TSP-50. We compare our method with OR solvers such as Concorde \citep{concorde2006}, LKH-3 \citep{lkh}, and 2-Opt \citep{lin1973effective}, as well as learning-based approaches including GCN \citep{joshi2019efficient} and DIFUSCO \citep{sun2023difusco} on 20-node and 50-node TSP instances.
An asterisk (*) indicates that post-processing heuristics were removed to ensure a fair comparison.
Baselines for TSP-50 are taken from \cite{sun2023difusco}.
}
\label{table:tsp-20}
\resizebox{0.9\textwidth}{!}{
\begin{tabular}{llccccc}
\toprule
\multicolumn{2}{c}{\multirow{2}{*}{\textbf{Method}}} & \multicolumn{2}{c}{\bf{TSP-20}} & \multicolumn{2}{c}{\bf{TSP-50}} \\
\cmidrule(lr){3-4} \cmidrule(lr){5-6}
& & Tour Length $\downarrow$ & Optimality Gap (\%) $\downarrow$ & Tour Length $\downarrow$ & Optimality Gap (\%) $\downarrow$ \\
\midrule
\multirow{3}{*}{\textbf{OR Solvers}} &Concorde & \bf{3.84} & \bf{0.00} & \bf{5.69} & \bf{0.00} \\
&LKH-3 & \bf{3.84} & \bf{0.00} & \bf{5.69} & \bf{0.00} \\
&2-Opt & 4.02 & 4.64 & 5.86 & 2.95 \\
\midrule
\multirow{3}{2cm}{\textbf{Learning-Based Models}} & GCN & \bf{3.85}* & 0.21* & 5.87 & 3.10 \\
&DIFUSCO & 3.88* & 1.07* & \bf{5.70} & \bf{0.10} \\
& \bf{Ours} & \bf{3.85} & \bf{0.18} & 5.71 & 0.41 \\
\bottomrule
\end{tabular}
}
\vspace{-0.3cm}
\end{table}



\section{Conclusion}

In this paper, we introduce a novel discrete diffusion model over finite symmetric groups. 
We identify the riffle shuffle as an effective forward transition and provide empirical rules for selecting the diffusion length. 
Additionally, we propose a generalized PL distribution for the reverse transition, which is provably more expressive than the PL distribution.
We further introduce a theoretically grounded "denoising schedule" to improve sampling and learning efficiency. 
Extensive experiments verify the effectiveness of our proposed model.
Despite significantly surpassing the performance of existing methods on large instances, our method still has limitations in larger scales.
In the future, we would like to explore methods to improve scalability even more.
We would also like to explore how we can fit other modern techniques in diffusion models like concrete scores \citep{meng2023concretescorematchinggeneralized} and score entropy \citep{lou2024discretediffusionmodelingestimating} into our shuffling dynamics.
Finally, we are interested in generalizing our model to general finite groups and exploring diffusion models on Lie groups.

\subsubsection*{Acknowledgments}
We thank the anonymous reviewers for their helpful comments. 
This work was funded, in part, by the NSERC DG Grant (No. RGPIN-2022-04636), the Vector Institute for AI, and Canada CIFAR AI Chair. 
Resources used in preparing this research were provided, in part, by the Province of Ontario, the Government of Canada through the Digital Research Alliance of Canada \url{alliance.can.ca}, and companies sponsoring the Vector Institute \url{www.vectorinstitute.ai/#partners}, and Advanced Research Computing at the University of British Columbia. 
Additional hardware support was provided by John R. Evans Leaders Fund CFI grant.
\color{black}

\bibliographystyle{iclr2025_conference}
\bibliography{ref}

\begin{thebibliography}{53}
\providecommand{\natexlab}[1]{#1}
\providecommand{\url}[1]{\texttt{#1}}
\expandafter\ifx\csname urlstyle\endcsname\relax
  \providecommand{\doi}[1]{doi: #1}\else
  \providecommand{\doi}{doi: \begingroup \urlstyle{rm}\Url}\fi

\bibitem[Applegate et~al.(2006)Applegate, Bixby, Chvatal, and Cook]{concorde2006}
David Applegate, Robert Bixby, Vasek Chvatal, and William Cook.
\newblock Concorde tsp solver, 2006.
\newblock URL \url{https://www.math.uwaterloo.ca/tsp/concorde/index.html}.

\bibitem[Arora(1998)]{arora1998polynomial}
Sanjeev Arora.
\newblock Polynomial time approximation schemes for euclidean traveling salesman and other geometric problems.
\newblock \emph{J. ACM}, 45\penalty0 (5):\penalty0 753--782, Sep 1998.
\newblock ISSN 0004-5411.
\newblock \doi{10.1145/290179.290180}.
\newblock URL \url{https://doi.org/10.1145/290179.290180}.

\bibitem[Austin et~al.(2023)Austin, Johnson, Ho, Tarlow, and van~den Berg]{austin2023structured}
Jacob Austin, Daniel~D. Johnson, Jonathan Ho, Daniel Tarlow, and Rianne van~den Berg.
\newblock Structured denoising diffusion models in discrete state-spaces, 2023.

\bibitem[Bayer \& Diaconis(1992)Bayer and Diaconis]{BayerDiaconis}
Dave Bayer and Persi Diaconis.
\newblock Trailing the dovetail shuffle to its lair.
\newblock \emph{The Annals of Applied Probability}, 2\penalty0 (2):\penalty0 294 -- 313, 1992.
\newblock \doi{10.1214/aoap/1177005705}.
\newblock URL \url{https://doi.org/10.1214/aoap/1177005705}.

\bibitem[Blondel et~al.(2020)Blondel, Teboul, Berthet, and Djolonga]{blondel2020fast}
Mathieu Blondel, Olivier Teboul, Quentin Berthet, and Josip Djolonga.
\newblock Fast differentiable sorting and ranking.
\newblock In \emph{International Conference on Machine Learning}, pp.\  950--959. PMLR, 2020.

\bibitem[Bresson \& Laurent(2021)Bresson and Laurent]{bresson2021transformer}
Xavier Bresson and Thomas Laurent.
\newblock The transformer network for the traveling salesman problem, 2021.

\bibitem[Campbell et~al.(2022)Campbell, Benton, Bortoli, Rainforth, Deligiannidis, and Doucet]{campbell2022continuoustimeframeworkdiscrete}
Andrew Campbell, Joe Benton, Valentin~De Bortoli, Tom Rainforth, George Deligiannidis, and Arnaud Doucet.
\newblock A continuous time framework for discrete denoising models, 2022.
\newblock URL \url{https://arxiv.org/abs/2205.14987}.

\bibitem[Cao et~al.(2007)Cao, Qin, Liu, Tsai, and Li]{PL}
Zhe Cao, Tao Qin, Tie-Yan Liu, Ming-Feng Tsai, and Hang Li.
\newblock Learning to rank: from pairwise approach to listwise approach.
\newblock In \emph{Proceedings of the 24th International Conference on Machine Learning}, ICML '07, pp.\  129--136, New York, NY, USA, 2007. Association for Computing Machinery.
\newblock ISBN 9781595937933.
\newblock \doi{10.1145/1273496.1273513}.
\newblock URL \url{https://doi.org/10.1145/1273496.1273513}.

\bibitem[Cuturi et~al.(2019)Cuturi, Teboul, and Vert]{cuturi2019differentiable}
Marco Cuturi, Olivier Teboul, and Jean-Philippe Vert.
\newblock Differentiable ranking and sorting using optimal transport.
\newblock \emph{Advances in neural information processing systems}, 32, 2019.

\bibitem[Diaconis(1988)]{diaconis1988group}
Persi Diaconis.
\newblock Group representations in probability and statistics.
\newblock \emph{Lecture notes-monograph series}, 11:\penalty0 i--192, 1988.

\bibitem[Fu et~al.(2021)Fu, Qiu, and Zha]{fu2021generalize}
Zhang-Hua Fu, Kai-Bin Qiu, and Hongyuan Zha.
\newblock Generalize a small pre-trained model to arbitrarily large tsp instances, 2021.

\bibitem[Gilbert(1955)]{GSshuffling}
E.~N. Gilbert.
\newblock Theory of shuffling.
\newblock \emph{Bell Telephone Laboratories Memorandum}, 1955.

\bibitem[Gonzalez(2007)]{gonzalez2007handbook}
Teofilo~F. Gonzalez.
\newblock \emph{Handbook of Approximation Algorithms and Metaheuristics (Chapman \& Hall/Crc Computer \& Information Science Series)}.
\newblock Chapman \& Hall/CRC, 2007.
\newblock ISBN 1584885505.

\bibitem[Grover et~al.(2018)Grover, Wang, Zweig, and Ermon]{grover2018stochastic}
Aditya Grover, Eric Wang, Aaron Zweig, and Stefano Ermon.
\newblock Stochastic optimization of sorting networks via continuous relaxations.
\newblock In \emph{International Conference on Learning Representations}, 2018.

\bibitem[{Gurobi Optimization, LLC}(2023)]{gurobi}
{Gurobi Optimization, LLC}.
\newblock {Gurobi Optimizer Reference Manual}, 2023.
\newblock URL \url{https://www.gurobi.com}.

\bibitem[Helsgaun(2017)]{lkh}
Keld Helsgaun.
\newblock An extension of the lin-kernighan-helsgaun tsp solver for constrained traveling salesman and vehicle routing problems, Dec 2017.

\bibitem[Ho et~al.(2020)Ho, Jain, and Abbeel]{ho2020denoising}
Jonathan Ho, Ajay Jain, and Pieter Abbeel.
\newblock Denoising diffusion probabilistic models, 2020.

\bibitem[Joshi et~al.(2019)Joshi, Laurent, and Bresson]{joshi2019efficient}
Chaitanya~K. Joshi, Thomas Laurent, and Xavier Bresson.
\newblock An efficient graph convolutional network technique for the travelling salesman problem, 2019.

\bibitem[Joshi et~al.(2021)Joshi, Cappart, Rousseau, and Laurent]{joshi2021learning}
Chaitanya~K Joshi, Quentin Cappart, Louis-Martin Rousseau, and Thomas Laurent.
\newblock Learning tsp requires rethinking generalization.
\newblock In \emph{International Conference on Principles and Practice of Constraint Programming}, 2021.

\bibitem[Kanungo(2020)]{shihan}
Shihan Kanungo.
\newblock Mixing time estimates for the riffle shuffle.
\newblock \emph{Euler Circle}, 2020.

\bibitem[Kim et~al.(2024)Kim, Yoon, and Cho]{kim2023generalized}
Jungtaek Kim, Jeongbeen Yoon, and Minsu Cho.
\newblock Generalized neural sorting networks with error-free differentiable swap functions.
\newblock In \emph{International Conference on Learning Representations (ICLR)}, 2024.

\bibitem[Kim et~al.(2023)Kim, Park, and Park]{kim2023symnco}
Minsu Kim, Junyoung Park, and Jinkyoo Park.
\newblock Sym-nco: Leveraging symmetricity for neural combinatorial optimization, 2023.

\bibitem[Kipf \& Welling(2017)Kipf and Welling]{kipf2017semisupervised}
Thomas~N. Kipf and Max Welling.
\newblock Semi-supervised classification with graph convolutional networks, 2017.

\bibitem[Kool et~al.(2019)Kool, van Hoof, and Welling]{kool2019attention}
Wouter Kool, Herke van Hoof, and Max Welling.
\newblock Attention, learn to solve routing problems!, 2019.

\bibitem[Kwon et~al.(2021)Kwon, Choo, Kim, Yoon, Gwon, and Min]{kwon2021pomo}
Yeong-Dae Kwon, Jinho Choo, Byoungjip Kim, Iljoo Yoon, Youngjune Gwon, and Seungjai Min.
\newblock Pomo: Policy optimization with multiple optima for reinforcement learning, 2021.

\bibitem[Lin \& Kernighan(1973)Lin and Kernighan]{lin1973effective}
Shen Lin and Brian~W Kernighan.
\newblock An effective heuristic algorithm for the travelling-salesman problem.
\newblock \emph{Operations research}, 21\penalty0 (2):\penalty0 498--516, 1973.

\bibitem[Loshchilov \& Hutter(2019)Loshchilov and Hutter]{loshchilov2019decoupled}
Ilya Loshchilov and Frank Hutter.
\newblock Decoupled weight decay regularization, 2019.

\bibitem[Lou et~al.(2024)Lou, Meng, and Ermon]{lou2024discretediffusionmodelingestimating}
Aaron Lou, Chenlin Meng, and Stefano Ermon.
\newblock Discrete diffusion modeling by estimating the ratios of the data distribution, 2024.
\newblock URL \url{https://arxiv.org/abs/2310.16834}.

\bibitem[Luce(1959)]{luce}
R.~D. Luce.
\newblock \emph{Individual Choice Behavior}.
\newblock John Wiley, 1959.

\bibitem[Mallows(1957)]{mallows1957non}
Colin~L Mallows.
\newblock Non-null ranking models. i.
\newblock \emph{Biometrika}, 44\penalty0 (1/2):\penalty0 114--130, 1957.

\bibitem[Mena et~al.(2018)Mena, Belanger, Linderman, and Snoek]{mena2018learning}
Gonzalo Mena, David Belanger, Scott Linderman, and Jasper Snoek.
\newblock Learning latent permutations with gumbel-sinkhorn networks.
\newblock In \emph{International Conference on Learning Representations}, 2018.

\bibitem[Meng et~al.(2023)Meng, Choi, Song, and Ermon]{meng2023concretescorematchinggeneralized}
Chenlin Meng, Kristy Choi, Jiaming Song, and Stefano Ermon.
\newblock Concrete score matching: Generalized score matching for discrete data, 2023.
\newblock URL \url{https://arxiv.org/abs/2211.00802}.

\bibitem[Min et~al.(2024)Min, Bai, and Gomes]{min2024unsupervised}
Yimeng Min, Yiwei Bai, and Carla~P. Gomes.
\newblock Unsupervised learning for solving the travelling salesman problem, 2024.

\bibitem[Noroozi \& Favaro(2016)Noroozi and Favaro]{noroozi2016unsupervised}
Mehdi Noroozi and Paolo Favaro.
\newblock Unsupervised learning of visual representations by solving jigsaw puzzles.
\newblock In \emph{European conference on computer vision}, pp.\  69--84. Springer, 2016.

\bibitem[{OEIS Foundation Inc.}(2024)]{eulerian}
{OEIS Foundation Inc.}
\newblock The eulerian numbers, entry a008292 in the {O}n-{L}ine {E}ncyclopedia of {I}nteger {S}equences, 2024.
\newblock Published electronically at \url{http://oeis.org/A008292}.

\bibitem[Petersen et~al.(2021)Petersen, Borgelt, Kuehne, and Deussen]{petersen2021differentiable}
Felix Petersen, Christian Borgelt, Hilde Kuehne, and Oliver Deussen.
\newblock Differentiable sorting networks for scalable sorting and ranking supervision.
\newblock In \emph{International conference on machine learning}, pp.\  8546--8555. PMLR, 2021.

\bibitem[Petersen et~al.(2022)Petersen, Borgelt, Kuehne, and Deussen]{petersen2022monotonic}
Felix Petersen, Christian Borgelt, Hilde Kuehne, and Oliver Deussen.
\newblock Monotonic differentiable sorting networks.
\newblock In \emph{International Conference on Learning Representations (ICLR)}, 2022.

\bibitem[Plackett(1975)]{plackett}
R.~L. Plackett.
\newblock The analysis of permutations.
\newblock \emph{Applied Statistics}, 24\penalty0 (2):\penalty0 193 -- 202, 1975.

\bibitem[Qiu et~al.(2022)Qiu, Sun, and Yang]{qiu2022dimes}
Ruizhong Qiu, Zhiqing Sun, and Yiming Yang.
\newblock Dimes: A differentiable meta solver for combinatorial optimization problems, 2022.

\bibitem[Reeds(1981)]{Rshuffling}
J.~Reeds.
\newblock Theory of shuffling.
\newblock \emph{Unpublished Manuscript}, 1981.

\bibitem[Saloff-Coste(2004)]{RandomWalksOnFiniteGroups}
Laurent Saloff-Coste.
\newblock \emph{Random Walks on Finite Groups}, pp.\  263--346.
\newblock Springer Berlin Heidelberg, Berlin, Heidelberg, 2004.
\newblock ISBN 978-3-662-09444-0.
\newblock \doi{10.1007/978-3-662-09444-0_5}.
\newblock URL \url{https://doi.org/10.1007/978-3-662-09444-0_5}.

\bibitem[Sanokowski et~al.(2024)Sanokowski, Hochreiter, and Lehner]{sanokowski2024diffusionmodelframeworkunsupervised}
Sebastian Sanokowski, Sepp Hochreiter, and Sebastian Lehner.
\newblock A diffusion model framework for unsupervised neural combinatorial optimization, 2024.
\newblock URL \url{https://arxiv.org/abs/2406.01661}.

\bibitem[Santa~Cruz et~al.(2017)Santa~Cruz, Fernando, Cherian, and Gould]{santa2017deeppermnet}
Rodrigo Santa~Cruz, Basura Fernando, Anoop Cherian, and Stephen Gould.
\newblock Deeppermnet: Visual permutation learning.
\newblock In \emph{Proceedings of the IEEE Conference on Computer Vision and Pattern Recognition}, pp.\  3949--3957, 2017.

\bibitem[Shi et~al.(2024)Shi, Han, Wang, Doucet, and Titsias]{shi2024simplifiedgeneralizedmaskeddiffusion}
Jiaxin Shi, Kehang Han, Zhe Wang, Arnaud Doucet, and Michalis~K. Titsias.
\newblock Simplified and generalized masked diffusion for discrete data, 2024.
\newblock URL \url{https://arxiv.org/abs/2406.04329}.

\bibitem[Sohl-Dickstein et~al.(2015)Sohl-Dickstein, Weiss, Maheswaranathan, and Ganguli]{dickstein2015deep}
Jascha Sohl-Dickstein, Eric Weiss, Niru Maheswaranathan, and Surya Ganguli.
\newblock Deep unsupervised learning using nonequilibrium thermodynamics.
\newblock In Francis Bach and David Blei (eds.), \emph{Proceedings of the 32nd International Conference on Machine Learning}, volume~37 of \emph{Proceedings of Machine Learning Research}, pp.\  2256--2265, Lille, France, 07--09 Jul 2015. PMLR.
\newblock URL \url{https://proceedings.mlr.press/v37/sohl-dickstein15.html}.

\bibitem[Song \& Ermon(2020)Song and Ermon]{song2020generative}
Yang Song and Stefano Ermon.
\newblock Generative modeling by estimating gradients of the data distribution, 2020.

\bibitem[Song et~al.(2021)Song, Sohl-Dickstein, Kingma, Kumar, Ermon, and Poole]{song2021scorebased}
Yang Song, Jascha Sohl-Dickstein, Diederik~P. Kingma, Abhishek Kumar, Stefano Ermon, and Ben Poole.
\newblock Score-based generative modeling through stochastic differential equations, 2021.

\bibitem[Sun et~al.(2023)Sun, Yu, Dai, Schuurmans, and Dai]{sun2023scorebasedcontinuoustimediscretediffusion}
Haoran Sun, Lijun Yu, Bo~Dai, Dale Schuurmans, and Hanjun Dai.
\newblock Score-based continuous-time discrete diffusion models, 2023.
\newblock URL \url{https://arxiv.org/abs/2211.16750}.

\bibitem[Sun \& Yang(2023)Sun and Yang]{sun2023difusco}
Zhiqing Sun and Yiming Yang.
\newblock Difusco: Graph-based diffusion solvers for combinatorial optimization, 2023.

\bibitem[Varma et~al.(2024)Varma, Nagaraj, and Shanmugam]{varma2024glaubergenerativemodeldiscrete}
Harshit Varma, Dheeraj Nagaraj, and Karthikeyan Shanmugam.
\newblock Glauber generative model: Discrete diffusion models via binary classification, 2024.
\newblock URL \url{https://arxiv.org/abs/2405.17035}.

\bibitem[Vaswani et~al.(2023)Vaswani, Shazeer, Parmar, Uszkoreit, Jones, Gomez, Kaiser, and Polosukhin]{transformer}
Ashish Vaswani, Noam Shazeer, Niki Parmar, Jakob Uszkoreit, Llion Jones, Aidan~N. Gomez, Lukasz Kaiser, and Illia Polosukhin.
\newblock Attention is all you need, 2023.

\bibitem[Vignac et~al.(2023)Vignac, Krawczuk, Siraudin, Wang, Cevher, and Frossard]{vignac2023digress}
Clement Vignac, Igor Krawczuk, Antoine Siraudin, Bohan Wang, Volkan Cevher, and Pascal Frossard.
\newblock Digress: Discrete denoising diffusion for graph generation, 2023.

\bibitem[Williams(1992)]{williams1992simple}
Ronald~J Williams.
\newblock Simple statistical gradient-following algorithms for connectionist reinforcement learning.
\newblock \emph{Machine Learning}, 8\penalty0 (3-4):\penalty0 229--256, 1992.

\end{thebibliography}

\newpage  

\appendix	
\label{sec:append}
\section{Additional Details of the GSR Riffle Shuffle Model}\label{appendix:GSR}

There are many equivalent definitions of the GSR riffle shuffle.
Here we also introduce the \textit{Geometric Description} \citep{BayerDiaconis}, which is easy to implement (and is how we implement riffle shuffles in our experiments).
We first sample $n$ points in the unit interval $[0,1]$ uniformly and independently, and suppose the points are labeled in order as $x_1<x_2<\cdots<x_n$. Then, the permutation that sorts the points $\{2x_1\},\ldots,\{2x_n\}$ follows the GSR distribution, where $\{x\}:=x-\fl{x}$ is the fractional part of $x$.

\section{Details of Our Network Architecture}\label{appendix:network_architecture}

We now discuss how to use neural networks to produce the parameters of the distributions discussed in Section \ref{subsubsec:inverse_shuffle} and \ref{subsubsec:GPL}.
Fix time $t$, and suppose $X_t=\bp{\b{x}_1^{(t)},\ldots,\b{x}_n^{(t)}}^\top\in\R^{n\times d}$.
Let $\texttt{encoder}_{\theta}$ be an object-specific encoder such that $\texttt{encoder}_{\theta}(X_t)\in\R^{n\times d_{\text{model}}}$.
For example, $\texttt{encoder}_{\theta}$ can be a CNN if $X_t$ is an image.
Let
\begin{equation}
\label{after-encoder-time}
Y_t:=\texttt{encoder}_{\theta}(X_t)+\texttt{time\_embd}(t)=\bp{\b{y}_1^{(t)},\ldots,\b{y}_n^{(t)}}^\top\in\R^{n\times d_{\text{model}}},
\end{equation}
where $\texttt{time\_embd}$ is the sinusoidal time embedding.
Then, we would like to feed the embeddings into a Transformer encoder \citep{transformer}.
Let $\texttt{transformer\_encoder}_{\theta}$ be the encoder part of the Transformer architecture.
However, each of the distributions we discussed previously has different number of parameters, so we will have to discuss them separately.

\paragraph{Inverse Transposition.}
For Inverse Transposition, we have $n+1$ parameters.
To obtain $n+1$ tokens from $\texttt{transformer\_encoder}_{\theta}$, we append a dummy token of 0's to $Y_t$.
Then we input $\bp{\b{y}_1^{(t)},\ldots,\b{y}_n^{(t)},0}^\top$ into $\texttt{transformer\_encoder}_{\theta}$ to obtain $Z\in\R^{(n+1)\times d_{\text{model}}}$.
Finally, we apply an MLP to obtain $(s_1,\ldots,s_n,k)\in\R^{n+1}$.

\paragraph{Inverse Insertion, Inverse Riffle Shuffle, PL Distribution.}
These three distributions all require exactly $n$ parameters, so we can directly feed $Y_t$ into $\texttt{transformer\_encoder}_{\theta}$.
Let the output of $\texttt{transformer\_encoder}_{\theta}$ be $Z\in\R^{n\times d_{\text{model}}}$, where we then apply an MLP to obtain the scores $\b{s}_{\theta}\in\R^n$.

\paragraph{The GPL Distribution.}
The GPL distribution requires $n^2$ parameters.
We first append $n$ dummy tokens of 0's to $Y_t$, with the intent that the $j\th$ dummy token would learn information about the $j\th$ column of the GPL parameter matrix, which represents where the $j\th$ component should be placed.
We then pass $\bp{\b{y}_1^{(t)},\ldots,\b{y}_n^{(t)},0,\ldots,0}^\top\in\R^{2n\times d_{\text{model}}}$ to $\texttt{transformer\_encoder}_{\theta}$.
When computing attention, we further apply a $2n\times 2n$ attention mask
\[M:=\m{0&A\\0&B}, \,\text{where }A\text{ is an }n\times n\text{ matrix of}-\infty, \, B=\m{-\infty&-\infty&\cdots&-\infty\\0&-\infty&\cdots&-\infty\\\vdots&\vdots&\ddots&\vdots\\0&0&\cdots&-\infty}\text{ is }n\times n.\]
The reason for having $B$ as an upper triangular matrix of $-\infty$ is that information about the $j\th$ component should only require information from the previous components.
Let 
\[\texttt{transformer\_encoder}_{\theta}(Y_t,M)=\m{Z_1\\Z_2},\]
where $Z_1,Z_2\in\R^{n\times d_{\text{model}}}$.
Finally, we obtain the GPL parameter matrix as 
$S_{\theta}=Z_1Z_2^\top\in\R^{n\times n}$.

For hyperparameters, we refer the readers to Appendix \ref{appendix:hyperparameters}.

\section{Discussions on Other Forms of the Loss}\label{appendix:different-loss-forms}

\subsection{Using KL Divergence}\label{appendix:kl-loss}

Many diffusion models will rewrite the variational bound Eq.\eqref{loss-no-schedule} in the following equivalent form of KL divergences to reduce the variance \citep{austin2023structured,ho2020denoising}:
\begin{align}
    \mathbb{E}_{p_{\text{data}}(X_0,\mathcal{X})q(X_{1:T}|X_0)}\Biggr[&D_{\mathrm{KL}}(q(X_t|X_0)\parallel p(X_T|\mathcal{X})) \nonumber \\
    & +\sum_{t>1}D_{\mathrm{KL}}(q(X_{t-1}|X_t,X_0)\parallel p_{\theta}(X_{t-1}|X_t))-\log p_{\theta}(X_0|X_1)\Biggr]
\end{align}
However, we cannot use this objective for $S_n$ in most cases. In particular, since 
\begin{equation}
q(X_{t-1}|X_t,X_0)=\frac{q(X_{t}|X_{t-1})q(X_{t-1}|X_0)}{q(X_t|X_0)},
\end{equation}
we can only derive the analytical form of $q(X_{t-1}|X_t,X_0)$ if we know the form of $q(X_t|X_0)$.
However, $q(X_t|X_0)$ is unavailable for most shuffling methods used in the forward process except for the riffle shuffles.
For riffle shuffle, $q(X_t|X_0)$ is actually available and permits efficient sampling \citep{BayerDiaconis}. However, $D_{\mathrm{KL}}(q(X_{t-1}|X_t,X_0)\parallel p_{\theta}(X_{t-1}|X_t))$ still does not have an analytical form, unlike in common diffusion models. 
As a result, we cannot use mean/score parameterization \citep{ho2020denoising,song2021scorebased} commonly employed in the continuous setting. 
Therefore, we need to rewrite the KL term as follows and resort to Monte Carlo (MC) estimation,
\begin{align}
    &\quad\,\, \mathbb{E}_{q(X_t|X_0)}\Big[ D_{\mathrm{KL}}(q(X_{t-1}|X_t,X_0)\parallel p_{\theta}(X_{t-1}|X_t)) \Big] \nonumber \\
    &= \mathbb{E}_{q(X_t|X_0)}\left[\sum_{X_{t-1}} \frac{q(X_t|X_{t-1})q(X_{t-1}|X_0)}{q(X_t|X_0)}\cdot\log\frac{q(X_{t-1}|X_t,X_0)}{p_{\theta}(X_{t-1}|X_t)} \right] \nonumber\\
    &= \mathbb{E}_{q(X_t|X_0)}\left[\sum_{X_{t-1}} q(X_{t-1}|X_0)\cdot \frac{q(X_t|X_{t-1})}{q(X_t|X_0)}\cdot\log\frac{q(X_{t-1}|X_t,X_0)}{p_{\theta}(X_{t-1}|X_t)} \right] \nonumber\\
    &= \mathbb{E}_{q(X_t|X_0)} \mathbb{E}_{q(X_{t-1}|X_0)} \left[\frac{q(X_t|X_{t-1})}{q(X_t|X_0)}\cdot\log\frac{q(X_{t-1}|X_t,X_0)}{p_{\theta}(X_{t-1}|X_t)}\right].
\end{align}

Note that $X_t \sim q(X_t|X_0)$ and $X_{t-1}\sim q(X_{t-1}|X_0)$ are drawn \textit{independently}. However, there is a high chance that $q(X_t|X_{t-1})=0$ for the $X_t$ and $X_{t-1}$ that are sampled. Consequently, if we only draw a few MC samples, the resulting estimator will likely be zero with zero-valued gradients, impeding the optimization of the training objective.
Therefore, writing the loss in the form of KL divergences does not help in the case of discrete diffusion on $S_n$.

\subsection{Sampling a Random Timestep}\label{appendix:random-timestep}

Another technique that many diffusion models use is to randomly sample a timestep $t$ and just compute the loss at time $t$.
Our framework also allows for randomly sampling one timestep and compute the loss as
\begin{equation}
\mathbb{E}_{p_{\text{data}}(X_0,\mathcal{X})} \mathbb{E}_{t}\mathbb{E}_{q(X_{t-1}|X_0)} \mathbb{E}_{q(X_t|X_{t-1})} \Big[-\log p_{\theta}(X_{t-1}|X_{t})\Big],
\end{equation}
omitting constant terms with respect to $\theta$.
With a denoising schedule of $[t_0,\ldots,t_k]$, the loss is
\begin{equation}\label{eq:random-timestep}
\mathbb{E}_{p_{\text{data}}(X_0,\mathcal{X})} \mathbb{E}_{i}\mathbb{E}_{q(X_{t_{i-1}}|X_0)} \mathbb{E}_{q(X_{t_{i}}|X_{t_{i-1}})} \Big[-\log p_{\theta}(X_{t_{i-1}}|X_{t_i})\Big],
\end{equation}
again omitting constant terms with respect to $\theta$.
It is also worth noting that computing the loss on a subset of the trajectory could potentially introduce more variance during training, which leads to a tradeoff.
For riffle shuffles, although we can sample $X_{t-1}$ directly for arbitrary timestep $t-1$ from $X_0$ as previously mentioned in this section, the whole trajectory would be really short.
For other shuffling methods, we would still have to run the entire forward process to sample from $q(X_{t-1}|X_0)$, which does not solve the inefficiency problem of other shuffling methods.
Therefore, we opt to use the loss in Eq.\eqref{loss-no-schedule} in most cases, and we would resort to Eq.\eqref{eq:random-timestep} for riffle shuffling really large instances.

\color{black}
\section{Additional Details of Decoding}\label{appendix:decoding}

\paragraph{Greedy Search.}
At each timestep $t_i$ in the denoising schedule, we can greedily obtain or approximate the mode of $p_{\theta}(X_{t_{i-1}}|X_{t_i})$. We can then use the (approximated) mode $X_{t_{i-1}}$ for the next timestep $p_{\theta}(X_{t_{i-2}}|X_{t_{i-1}})$.
Note that the final $X_0$ obtained using such a greedy heuristic may not necessarily be the mode of $p_{\theta}(X_0 \vert \mathcal{X})$.


\paragraph{Beam Search.}

We can use beam search to improve the greedy approach.
The basic idea is that, at each timestep $t_i$ in the denoising schedule, we compute or approximate the top-$k$-most-probable results from $p_{\theta}(X_{t_{i-1}}|X_{t_i})$.
For each of the top-$k$ results, we sample top-$k$ from $p_{\theta}(X_{t_{i-2}}|X_{t_{i-1}})$.
Now we have $k^2$ candidates for $X_{t_{i-2}}$, and we only keep the top $k$ of the $k^2$ candidates.

However, it is not easy to obtain the top-$k$-most-probable results for some of the distributions.
Here we provide an algorithm to approximate top-$k$ of the PL and the GPL distribution.
Since the PL distribution is a strict subset of the GPL distribution, it suffices to only consider the GPL distribution with parameter matrix $S$.
The algorithm for approximating top-$k$ of the GPL distribution is another beam search.
We first pick the $k$ largest elements from the first row of $S$.
For each of the $k$ largest elements, we pick $k$ largest elements from the second row of $S$, excluding the corresponding element picked in the first row.
We now have $k^2$ candidates for the first two elements of a permutation, and we only keep the top-$k$-most-probable candidates.
We then continue in this manner.

\section{The Expressiveness of PL and GPL}
\label{appendix:expressiveness}

In this section, we prove the expressiveness results of the PL and GPL distribution.
We first show that the PL distribution has limited expressiveness.

\pldelta*

\vspace*{-0.5cm}

\begin{proof}
Assume for a contradiction that there exists some $\sigma\in S_n$ and $\b{s}$ such that $\mathrm{PL}_{\b{s}}=\delta_{\sigma}$.
Then we have 
\[\prod_{i=1}^{n}\frac{\exp\lrp{s_{\sigma(i)}}}{\sum_{j=i}^{n}\exp\lrp{s_{\sigma(j)}}}=1.\]
Since each of the term in the product is less than or equal to 1, we must have 
\begin{equation}
    \label{one-for-all-i}
    \frac{\exp\lrp{s_{\sigma(i)}}}{\sum_{j=i}^{n}\exp\lrp{s_{\sigma(j)}}}=1
\end{equation}
for all $i\in[n]$.
In particular, we have 
\[\frac{\exp\lrp{s_{\sigma(1)}}}{\sum_{j=1}^{n}\exp\lrp{s_{\sigma(j)}}}=1,\]
which happens if and only if $s_{\sigma(j)}=-\infty$ for all $j\geq 2$.
But this contradicts \eqref{one-for-all-i}.
\end{proof}

We then prove that the reverse process using the GPL distribution can model any target distribution.
We first introduce two lemmas.

\vspace*{0.3cm}

\begin{lemma}\label{lemma:GPL-delta}
The GPL distribution can represent any delta distribution on $S_n$ if allowing $-\infty$ scores.
\end{lemma}

\vspace*{-0.3cm}

\begin{proof}
Fix $\sigma\in S_n$.
For all $i\in[n]$, we let $s_{i,\sigma(i)}=0$ and $s_{i,j}=-\infty$ for all $j\neq \sigma(i)$.
Then it is clear that $\mathrm{GPL}_{(s_{ij})}=\delta_{\sigma}$.
\end{proof}

\begin{lemma}\label{lemma:GPL-swap}
Let $\sigma\in S_n$ and let $\pi:=\rndm{n-1 & n}\circ\sigma$, where $\rndm{n-1 & n}$ is a transposition that swaps the last two indices and $\circ$ is function composition. That is, $\pi$ is obtained from $\sigma$ by swapping the last two components.
Let $p$ be any probability distribution on $S_n$ whose support is a subset of $\{\sigma,\pi\}$.
Then there exists scores $(s_{ij})_{i,j\in[n]}$ (possibly $-\infty$) such that $\text{GPL}_{(s_{ij})}=p$.
\end{lemma}


\begin{proof}

Note that we have 
\[\pi=\rndm{1&2&\cdots&n-1&n \\ \sigma(1)&\sigma(2)&\cdots&\sigma(n)&\sigma(n-1)}.\]
For all $1\leq i\leq n-2$, we let $s_{i,\sigma(i)}=0$ and $s_{i,j}=-\infty$ for all $j\neq \sigma(i)$.
Let $s_{n-1,\sigma(n-1)}=\ln p(\sigma)$ and let $s_{n-1,\sigma(n)}=\ln p(\pi)$.
Finally, let $s_{n,j}=0$ for all $j\in[n]$.
It is then easy to verify that $\text{GPL}_{(s_{ij})}=p$.
\end{proof}

We then state the main expressiveness theorem.

\begin{theorem}\label{thm:GPL-expressiveness}
Let $Y_0\in S_n$ be a random variable with arbitrary distribution $q(Y_0)$.
Let $p$ be any distribution over $S_n$.
Then there exists some $k\in\N$ and random variables $Y_1,\ldots,Y_k$ with GPL (allowing $-\infty$ scores) transition distributions $q(Y_i\mid Y_{i-1})$ for each $i\in[k]$ such that $q(Y_k)=p$.
\end{theorem}

\begin{figure}[t!]
\adjustbox{scale=0.515,center}{
\begin{tikzcd}
\text{Target} & 1/4                                                & 1/2                                                & 0                                                & 1/4                                                      & 0                                                      & 0                                                      \\
S_3           & \sigma_1=\begin{pmatrix}  1 & 2 & 3 \\  1 & 2 & 3 \end{pmatrix} & \pi_1=\begin{pmatrix}  1 & 2 & 3 \\  1 & 3 & 2 \end{pmatrix} & {\color{blue}\sigma_2=\begin{pmatrix}  1 & 2 & 3 \\  2 & 1 & 3 \end{pmatrix}} & {\color{blue}\pi_2=\begin{pmatrix}  1 & 2 & 3 \\  2 & 3 & 1 \end{pmatrix}} & {\color{red}\sigma_3=\begin{pmatrix}  1 & 2 & 3 \\  3 & 1 & 2 \end{pmatrix}} & {\color{red}\pi_3=\begin{pmatrix}  1 & 2 & 3 \\  3 & 2 & 1 \end{pmatrix}} \\
\text{Start}  & 1/6 \arrow[rd]                                     & 1/6 \arrow[d]                                      & 1/6 \arrow[ld]                                     & 1/6 \arrow[lld]                                    & 1/6 \arrow[llld]                                   & 1/6 \arrow[lllld]                                  \\
              & 0                                                      & 1 \arrow[ld] \arrow[d]                                 & 0                                                      & 0                                                      & 0                                                      & 0                                                      \\
              & 1/4 \arrow[rrrd]                                   & 3/4                                                & 0                                                      & 0                                                      & 0                                                      & 0                                                      \\
              & 0                                                      & 3/4 \arrow[ld] \arrow[d]                           & 0                                                      & 1/4                                                & 0                                                      & 0                                                      \\
            \text{Target}  & 1/4                                                & 1/2                                                & 0                                                      & 1/4                             & 0                                                      & 0                                                      
\end{tikzcd}
}
\caption{A simple example for the GPL expressiveness theorem on $S_3$.}
\label{fig:expressiveness-example}
\end{figure}

Before proceeding to the proof, we first provide a small example illustrating the construction we are going to use. 
Consider Fig. \ref{fig:expressiveness-example} on $S_3$.
Suppose we start with $q(Y_0)$ being the uniform distribution, and the target distribution $p$ is listed at the top row of the diagram.
We partition $S_3$ into $3$ pairs $(\sigma_i,\pi_i)$ indicated by their color in the diagram.
The permutations within each pair differ by one swap of the last two indices, so each pair is the pair considered in Lemma \ref{lemma:GPL-swap}.
The first step is to concentrate all probability mass on $\pi_1$ using GPL and Lemma \ref{lemma:GPL-delta}.
Now note that we only need $1/4+1/2=3/4$ probability for the first pair $(\sigma_1,\pi_1)$, so there is a $1/4$ excess.
We then use Lemma \ref{lemma:GPL-swap} to move the excess amount to $\sigma_1$.
Then we use Lemma \ref{lemma:GPL-delta} to move the excess amount out of pair 1 to $\pi_2$ in pair 2.
Finally, we use Lemma \ref{lemma:GPL-swap} to distribute the correct mass to $\sigma_1$ and $\pi_1$ from the $3/4$ that $\pi_1$ currently holds.
We now present the formal construction.

\begin{proof}

For $\sigma,\pi\in S_n$, we say that $\sigma$ and $\pi$ are a pair if $\pi=\rndm{n-1 & n}\circ\sigma$.
Note that we can partition $S_n$ into $n!/2$ disjoint pairs.
We write the pairs as $(\sigma_1,\pi_1),\ldots,(\sigma_{n!/2},\pi_{n!/2})$.

We now give an algorithm that explicitly constructs the transitions from $Y_0$ to the target distribution $p$.
The intuition of the algorithm is that we distribute the probability mass for one pair of permutations at a time.
The variable $p_{\text{leftover}}$ records how much mass we have yet to distribute.

\clearpage

\begin{ourbox}

\begin{enumerate}

\item Let $p_{\text{leftover}}:=1$. Let $q(Y_1\mid Y_0)=\delta_{\pi_1}$.
\item Iterate through all pairs $(\sigma_i,\pi_i)$ for $i$ from $1$ to $n!/2$:

\begin{itemize}

\item[(a)] Let $p_{\text{excess}}:=p_{\text{leftover}}-p(\sigma_i)-p(\pi_i)$.
\item[(b)] Define $q(Y_{3i-1}\mid Y_{3i-2})$ such that \[q(Y_{3i-1}=\sigma_i\mid Y_{3i-2}=\pi_i)=\frac{p_{\text{excess}}}{p_{\text{leftover}}},\] \[q(Y_{3i-1}=\pi_i\mid Y_{3i-2}=\pi_i)=1-\frac{p_{\text{excess}}}{p_{\text{leftover}}},\]
and $q(Y_{3i-1}=\tau\mid Y_{3i-2}=\pi_i)=0$ for all other $\tau\notin\{\sigma_i,\pi_i\}$.
Let $q(Y_{3i-1}\mid Y_{3i-2}=\tau)=\delta_{\tau}$ for $\tau\notin\{\pi_i\}$.
\item[(c)] Define $q(Y_{3i}\mid Y_{3i-1})$ such that $q(Y_{3i}\mid Y_{3i-1}=\sigma_i)=\delta_{\pi_{i+1}}$ and $q(Y_{3i}\mid Y_{3i-1}=\tau)=\delta_{\tau}$ for all $\tau\neq\sigma_i$.
\item[(d)] Finally, define $q(Y_{3i+1}\mid Y_{3i})$ such that \[q(Y_{3i+1}=\sigma_i\mid Y_{3i}=\pi_i)=\frac{p(\sigma_i)}{1-p_{\text{excess}}},\] \[ q(Y_{3i+1}=\pi_i\mid Y_{3i}=\pi_i)=\frac{p(\pi_i)}{1-p_{\text{excess}}},\]
and $q(Y_{3i+1}=\tau\mid Y_{3i}=\pi_i)=0$ for all other $\tau\notin\{\sigma_i,\pi_i\}$.
Let $q(Y_{3i+1}\mid Y_{3i}=\tau)=\delta_{\tau}$ for $\tau\notin\{\pi_i\}$.
\item[(e)] Update $p_{\text{leftover}}:=p_{\text{excess}}$.

\end{itemize}

\item Return $q(Y_1\mid Y_0),\ldots,q(Y_{(3n!/2) + 1}\mid Y_{3n!/2})$.

\end{enumerate}

\end{ourbox}

Note that $q(Y_1=\pi_1)=1$.
Also note that by Lemma \ref{lemma:GPL-delta} and \ref{lemma:GPL-swap}, all transition distributions can be modeled by the GPL distribution.
We claim that at the end of iteration $i$, we must have
\begin{itemize}
    \item[(1)] $p_{\text{leftover}}=\sum_{j=i+1}^{n!/2}p(\sigma_j)+p(\pi_j)$;
    \item[(2)] $q(Y_{3i+1}=\sigma_i)=p(\sigma_i)$ and $q(Y_{3i+1}=\pi_i)=p(\pi_i)$;
    \item[(3)] $q(Y_{3i+1}=\pi_{i+1})=p_{\text{leftover}}$.
\end{itemize}
We proceed by induction on $i$.
For $i=1$, it is clear that $p_{\text{leftover}}=\sum_{j=2}^{n!/2}p(\sigma_j)+p(\pi_j)$ at the end of the first iteration.
We also note that $q(Y_2=\sigma_1)=p_{\text{excess}}$ and $q(Y_2=\pi_1)=1-p_{\text{excess}}=p(\sigma_1)+p(\pi_1)$.
After step 2(c) of the algorithm, we have $q(Y_3=\sigma_1)=0$, $q(Y_3=\pi_1)=p(\sigma_1)+p(\pi_1)$, and $q(Y_3=\pi_2)=p_{\text{excess}}$.
Finally, after step 2(d) and 2(e), we get $q(Y_4=\sigma_1)=p(\sigma_1)$, $q(Y_4=\pi_1)=p(\pi_1)$, and $q(Y_4=\pi_2)=p_{\text{leftover}}$.

For the inductive step, let $i\geq 2$.
We know by the inductive hypothesis that at the start of iteration $i$, we have $p_{\text{leftover}}=\sum_{j=i}^{n!/2}p(\sigma_j)+p(\pi_j)$.
So $0\leq p_{\text{excess}}\leq 1$, and all transition distributions in iteration $i$ are well-defined.
It is easy to verify that:
\begin{itemize}
    \item After step 2(b), $q(Y_{3i-1}=\sigma_i)=p_{\text{excess}}$ and $q(Y_{3i-1}=\pi_i)=p(\sigma_i)+p(\pi_i)$.
    \item After step 2(c), $q(Y_{3i}=\sigma_i)=0$, $q(Y_{3i}=\pi_i)=p(\sigma_i)+p(\pi_i)$, and $q(Y_{3i}=\pi_{i+1})=p_{\text{excess}}$.
    \item After step 2(d)(e), $q(Y_{3i+1}=\sigma_i)=p(\sigma_i)$, $q(Y_{3i+1}=\pi_i)=p(\pi_i)$, and $q(Y_{3i+1}=\pi_{i+1})=p_{\text{leftover}}=\sum_{j=i+1}^{n!/2}p(\sigma_j)+p(\pi_j)$.
\end{itemize}
The $p_{\text{leftover}}$ at the end of iteration $n!/2$ is $p_{\text{excess}}-p(\sigma_{n!/2})-p(\pi_{n!/2})=0$.
This finishes the induction.
Finally, we observe that after iteration $i$, $q(Y_j=\sigma_i)$ and $q(Y_j=\pi_i)$ will never be changed for $j\geq i$. This finishes the proof.
\end{proof}

Finally, Theorem \ref{thm:GPL-expressiveness-main-paper}, which is stated in the main paper, follows immediately from Theorem \ref{thm:GPL-expressiveness} by setting $q(Y_0)$ to be the uniform distribution and $p$ to be the target distribution over $S_n$.

\gplexpr*

\section{Results on TV Distances between Riffle Shuffles}
\label{appendix:tv-proof}

\tvbetween*

\begin{proof}

Let $\sigma\in S_n$.
It was shown in \citet{BayerDiaconis} that 
\[
q_{\mathrm{RS}}^{(t)}(\sigma) = \frac{1}{2^{tn}}\cdot\binom{n+2^t-r}{n},
\]
where $r$ is the number of rising sequences of $\sigma$.
Note that if two permutations have the same number of rising sequences, then they have equal probability.
Hence, we have 
\begin{align*}
    D_{\mathrm{TV}}\lrp{q^{(t)}_{\mathrm{RS}}-q^{(t')}_{\mathrm{RS}}} &= \frac12\sum_{\sigma\in S_n}\left|q^{(t)}_{\mathrm{RS}}(\sigma)-q^{(t')}_{\mathrm{RS}}(\sigma)\right| = \frac12 \sum_{r=1}^{n}A_{n,r}\left|q^{(t)}_{\mathrm{RS}}(\sigma)-q^{(t')}_{\mathrm{RS}}(\sigma)\right| \\
    &= \frac12\sum_{r=1}^{n} A_{n,r} \left|\frac{1}{2^{tn}}\binom{n+2^t-r}{n}-\frac{1}{2^{t'n}}\binom{n+2^{t'}-r}{n}\right|,
\end{align*}
as claimed.
For \eqref{tvbetweenu}, replace $q^{(t')}_{\mathrm{RS}}(\sigma)$ with $u(\sigma)=\frac{1}{n!}$ in the above derivations.
\end{proof}

\section{Additional Details on Experiments}\label{appendix:exp_details}

\subsection{Additional Synthetic Experiment: Learning a Single Permutation List}\label{appendix:learn-one-perm}

We present an additional synthetic experiment to illustrate the effectiveness of SymmetricDiffusers.
A natural approach to modeling permutations is to represent them as sequences of numbers from $\{0,1,\ldots,n-1\}$ and apply sequence generation models.
However, existing sequence generation models relax the permutation constraint, allowing numbers to repeat and modeling transitions in a larger sequence space.
To highlight the advantage, \ie, restricting the diffusion trajectory to $S_n$ and modeling transitions in a smaller space ($O(n!)$ vs $O(n^n)$), of our approach, we conduct a synthetic experiment: learning a delta distribution over $S_n$, \ie, a single permutation.
In particular, we compare our method with SEDD \citep{lou2024discretediffusionmodelingestimating}, one of the strongest discrete diffusion models.

In the experiment, we train models to learn the identity permutation ($n=100$) and a fixed arbitrary distribution ($n=100$ and $200$).
For details on hyperparameters and training, please refer to Appendix \ref{appendix:hyperparameters}.
From Table \ref{table:learn-one-perm}, we see that our method reaches 100\% accuracy in all settings. 
While SEDD also performs well, the gap between its accuracy and ours grows with sequence length.
These experiments verify the advantage and the effectiveness of our method.

We would also like to highlight another key limitation of other discrete diffusion models (including SEDD).
In fact, it is nearly impossible for other discrete diffusion models to solve the tasks introduced in our paper, including the jigsaw puzzle, sorting multi-digit MNIST numbers, and the TSP. The reason is that all prior discrete diffusion models assume a fixed alphabet or vocabulary, and they model categorical distributions on the fixed alphabet. For example, in NLP tasks, the vocabulary is predefined and fixed. However, the alphabet is the set of all possible image patches for image tasks such as jigsaw puzzles and sorting MNIST numbers. It is impractical to gather the complete alphabet beforehand. We could potentially train VQVAEs to obtain quantized image embeddings. However, such an approach introduces the approximation error from the quantized alphabet. For the TSP, each node of the graph is a point in the continuous space $\R^2$, so it is also impossible to gather the complete alphabet. In contrast, our method can be successfully applied to these tasks because we model a distribution on the fixed alphabet $S_n$, and we treat each permutation as a function that can be applied to an ordered list of objects.

\begin{table}[t]
\centering
\captionsetup{skip=3pt}
\caption{Results on the toy experiment of learning a single permutation list.}
\label{table:learn-one-perm}
\resizebox{0.9\textwidth}{!}{
\begin{tabular}{lccccccc}
\toprule
\multirow{2}{*}{\textbf{Method}} & \multicolumn{2}{c}{\bf{Identity $n=100$}} & \multicolumn{2}{c}{\bf{Arbitrary Permutation $n=100$}} & \multicolumn{2}{c}{\bf{Arbitrary Permutation $n=200$}} \\
\cmidrule(lr){2-3} \cmidrule(lr){4-5} \cmidrule(lr){6-7}
& Accuracy (\%) & Correct (\%) & Accuracy (\%) & Correct (\%) & Accuracy (\%) & Correct (\%) \\
\midrule
SEDD \citep{lou2024discretediffusionmodelingestimating} & 95.47 & 99.95 & 93.24 & 99.93 & 88.75 & 99.94 \\
Ours & \bf{100} & \bf{100} & \bf{100} & \bf{100} & \bf{100} & \bf{100}  \\
\bottomrule
\end{tabular}
}
\end{table}

\subsection{Datasets Used in the Full Paper}

\paragraph{Jigsaw Puzzle.}

We created the Noisy MNIST dataset by adding \textit{i.i.d.} Gaussian noise with a mean of 0 and a standard deviation of 0.01 to each pixel of the MNIST images. No noise was added to the CIFAR-10 images.
The noisy images are then saved as the Noisy MNIST dataset.
During training, each image is divided into $n\times n$ patches. A permutation is then sampled uniformly at random to shuffle these patches. 
The training set for Noisy MNIST comprises 60,000 images, while the CIFAR-10 training set contains 10,000 images. The Noisy MNIST test set, which is pre-shuffled, also includes 10,000 images. The CIFAR-10 test set, which shuffles images on the fly, contains 10,000 images as well.

\paragraph{Sort 4-Digit MNIST Numbers.}
For each training epoch, we generate 60,000 sequences of 4-digit MNIST images, each of length $n$, constructed dynamically on the fly. These 4-digit MNIST numbers are created by concatenating four MNIST images, each selected uniformly at random from the entire MNIST dataset, which consists of 60,000 images. For testing purposes, we similarly generate 10,000 sequences of $n$ 4-digit MNIST numbers on the fly.

\paragraph{TSP.}
We take the TSP-20 and TSP-50 dataset from \cite{joshi2021learning} \footnote{\url{https://github.com/chaitjo/learning-tsp?tab=readme-ov-file}}.
The train set consists of 1,512,000 graphs, where each node is an \iid sample from the unit square $[0,1]^2$.
The labels are optimal TSP tours provided by the Concorde solver \citep{concorde2006}.
The test set consists of 1,280 graphs, with ground truth tour generated by the Concorde solver as well.

\subsection{Ablation Studies}\label{appendix:ablation_study}

\paragraph{Choices for Reverse Transition and Decoding Strategies.}
As demonstrated in Table \ref{table:additional transition-type-ablation}, we have explored various combinations of forward and inverse shuffling methods across tasks involving different sequence lengths. Both GPL and PL consistently excel in all experimental scenarios, highlighting their robustness and effectiveness. It is important to note that strategies such as random transposition and random insertion paired with their respective inverse operations, are less suitable for tasks with longer sequences. This limitation is attributed to the prolonged mixing times required by these two shuffling methods, a challenge that is thoroughly discussed in Section \ref{subsubsec:mixing_times}. 

\label{appendix: other combinations}
\begin{table}[ht]
\centering
\captionsetup{skip=5pt}
\caption{More results on sorting the 4-digit MNIST dataset using different combinations of forward process methods and reverse process methods. Results averaged over 3 runs with different seeds.
RS: riffle shuffle; GPL: generalized Plackett-Luce; IRS: inverse riffle shuffle; RT: random transposition; IT: inverse transposition; RI: random insertion; II: inverse insertion.}
\label{table:additional transition-type-ablation}
\resizebox{\textwidth}{!}{
\begin{tabular}{@{}llccc@{}}
\toprule
&& \multicolumn{3}{c}{Sequence Length}   \\ 
\cmidrule(l){3-5} 
& & 9 & 32 & 52  \\ 
\midrule
\multirow{4}{*}{RS (forward) + GPL (reverse) + greedy} 
& Denoising Schedule & $[0,3,5,9]$ & $[0,5,7,12]$ & $[0,5,6,7,10,13]$   \\
& Kendall-Tau $\uparrow$ & 0.948 & 0.857 & 0.779  \\
& Accuracy (\%) & 89.4 & 54.8 &  24.4  \\
& Correct (\%) & 95.9 & 88.1 & 81.6  \\
\midrule
\multirow{4}{*}{RS (forward) + PL (reverse) + greedy} 
& Denoising Schedule & $[0,3,5,9]$ & $[0,5,7,12]$ & $[0,5,6,7,10,13]$  \\
& Kendall-Tau & 0.953 & 0.867 &  0.799 \\
& Accuracy (\%) & 90.9 & 56.4 &  26.4  \\
& Correct (\%) & 96.4 & 89.0 &  83.3 \\
\midrule
\multirow{4}{*}{RS (forward) + PL (reverse) + beam search} 
& Denoising Schedule & $[0,3,5,9]$ & $[0,5,7,12]$ & $[0,5,6,7,10,13]$  \\
& Kendall-Tau $\uparrow$ & 0.955 & 0.869 & 0.797 \\
& Accuracy (\%) & 91.1 & 57.2 &  26.4  \\
& Correct (\%) & 96.5 & 89.2 &  83.1 \\
\midrule
\multirow{4}{*}{RS (forward) + IRS (reverse) + greedy} 
& $T$ & 9 & 12 & 13  \\
& Kendall-Tau $\uparrow$ & 0.947 & 0.794 & 0.390  \\
& Accuracy (\%) & 88.6 & 24.4 &  0.6  \\
& Correct (\%) & 95.9 & 82.5 & 44.6  \\
\midrule
\multirow{4}{*}{RT (forward) + IT (reverse) + greedy} 
& $T$ (using approx. $\frac{n}{2}\log n$) & 15 & 55 & 105 \\
& Kendall-Tau $\uparrow$ & 0.490 & \multicolumn{2}{c}{\multirow{3}{*}{Out of Memory}}   \\
& Accuracy (\%) & 18.0 &  &    \\
& Correct (\%) & 59.5 &  &   \\
\midrule
\multirow{4}{*}{RI (forward) + II (reverse) + greedy}
& $T$ (using approx. $n\log n$) & 25 & 110 & 205 \\
& Kendall-Tau $\uparrow$ & 0.954 &  \multicolumn{2}{c}{\multirow{3}{*}{Out of Memory}}   \\
& Accuracy (\%) & 91.1 &  &    \\
& Correct (\%) & 96.4 &  &   \\
\bottomrule
\end{tabular}
}
\end{table}

\paragraph{Denoising Schedule.}
We also conduct an ablation study on how we should merge reverse steps.
As shown in Table \ref{table:denoising-schedule-ablation-100}, the choice of the denoising schedule can significantly affect the final performance.
In particular, for $n=100$ on the Sort 4-Digit MNIST Numbers task, the fact that $[0,15]$ has 0 accuracy justifies our motivation to use diffusion to break down learning into smaller steps.
The result we get also matches with our proposed heuristic in Section \ref{subsec:denoising_schedule}.

\begin{table}[hbt]
\centering
\captionsetup{skip=5pt}
\caption{Results of sorting 100 4-digit MNIST images using various denoising schedules with  the combination of RS, GPL and beam search consistently applied. 
}
\label{table:denoising-schedule-ablation-100}
\begin{tabular}{lccccc}
\toprule
Denoising Schedule & $[0,15]$ & $[0,8,9,15]$ & $[0,7,8,9,15]$ & $[0,7,8,10,15]$ & $[0,8,10,15]$ \\
\midrule
Kendall-Tau $\uparrow$ & 0.000 & 0.316 & 0.000 & 0.000 & \textbf{0.646} \\
Accuracy (\%) & 0.0 & 0.0 & 0.0 & 0.0 & \textbf{4.5} \\ 
Correct (\%) & 1.0 & 39.6 & 1.0 & 1.0 & \textbf{69.8} \\
\bottomrule
\end{tabular}
\end{table}

\subsection{Latent Loss in Jigsaw Puzzle}
In the original setup of the Jigsaw Puzzle experiment using the Gumbel-Sinkhorn network \citep{mena2018learning}, the permutations are latent.
That is, the loss function in Gumbel-Sinkhorn is a pixel-level MSE loss and does not use the ground truth permutation label.
However, our loss function \eqref{loss} actually (implicitly) uses the ground truth permutation that maps the shuffled image patches to their original order.
Therefore, for fair comparison with the Gumbel-Sinkhorn network in the Jigsaw Puzzle experiment, we modify our loss function so that it does not use the ground truth permutation.
Recall from Section \ref{subsec:reverse_process} that we defined
\begin{equation}
    p_{\theta}(X_{t-1}|X_t)=\sum_{\sigma_t'\in \mathcal{T}}p(X_{t-1}|X_t,\sigma_t')p_{\theta}(\sigma_t'|X_t).
    \label{xtm1-given-xt}
\end{equation}
In our original setup, we defined $p(X_{t-1}|X_t,\sigma_t')$ as a delta distribution $\delta(X_{t-1}=Q_{\sigma_t^{\prime}} X_t)$, but this would require that we know the permutation that turns $X_{t-1}$ to $X_t$, which is part of the ground truth.
So instead, we parameterize $p(X_{t-1}|X_t,\sigma_t')$ as a Gaussian distribution $\mathcal{N}\bp{X_{t-1}|Q_{\sigma_t}X_t,I}$.
At the same time, we note that to find the gradient of \eqref{loss}, it suffices to find the gradient of the log of \eqref{xtm1-given-xt}.
We use the REINFORCE trick \citep{williams1992simple} to find the gradient of $\log p_{\theta}(X_{t-1}|X_t)$, which gives us 
\begin{align*}
    &\quad \nabla_{\theta} \log p_{\theta}(X_{t-1}|X_t) \\
    &= \frac{1}{\sum\limits_{\sigma_t'\in \mathcal{T}}p(X_{t-1}|X_t,\sigma_t')p_{\theta}(\sigma_t'|X_t)}\cdot \sum_{\sigma_t'\in \mathcal{T}}p(X_{t-1}|X_t,\sigma_t')\nabla_{\theta}p_{\theta}(\sigma_t'|X_t) \\
    &= \frac{1}{\sum\limits_{\sigma_t'\in \mathcal{T}}p(X_{t-1}|X_t,\sigma_t')p_{\theta}(\sigma_t'|X_t)}\cdot \sum_{\sigma_t'\in \mathcal{T}}p(X_{t-1}|X_t,\sigma_t')p_{\theta}(\sigma_t'|X_t)\bp{\nabla_{\theta}\log p_{\theta}(\sigma_t|X_t)} \\
    &= \frac{\E{p_{\theta}(\sigma_t|X_t)}\Bb{p(X_{t-1}|X_t,\sigma_t')\nabla_{\theta}\log p_{\theta}(\sigma_t|X_t)}}{\E{p_{\theta}(\sigma_t|X_t)}\Bb{p(X_{t-1}|X_t,\sigma_t')}} \\
    &\approx \sum_{n=1}^{N}\frac{p\lrp{X_{t-1}|X_t,\sigma_t^{(n)}}}{\sum_{m=1}^{N}p\lrp{X_{t-1}|X_t,\sigma_t^{(m)}}}\cdot\nabla_{\theta}\log p_{\theta}\lrp{\sigma_t^{(n)}|X_t},
\end{align*}
where we have used Monte-Carlo estimation in the last step, and $\sigma_t^{(1)},\ldots,\sigma_t^{(N)}\sim p_{\theta}(\sigma_t|X_t)$.
We further add an entropy regularization term $-\lambda\cdot\E{p_{\theta}(\sigma_t|X_t)}\lrb{\log p_{\theta}(\sigma_t|X_t)}$ to each of $\log p_{\theta}(X_{t-1}|X_t)$.
Using the same REINFORCE and Monte-Carlo trick, we obtain 
\[\nabla_{\theta}\lrp{-\lambda\cdot\E{p_{\theta}(\sigma_t|X_t)}\Bb{\log p_{\theta}(\sigma_t|X_t)}}\approx \sum_{n=1}^{N}-\lambda \log p_{\theta}\lrp{\sigma_t^{(n)}|X_t}\nabla_{\theta}\log p_{\theta}\lrp{\sigma_t^{(n)}|X_t},\]
where $\sigma_t^{(1)},\ldots,\sigma_t^{(N)}\sim p_{\theta}(\sigma_t|X_t)$.
Therefore, we have 
\begin{align}
    \label{reinforce-entropy}
    &\quad \nabla_{\theta}\lrp{\log p_{\theta}(X_{t-1}|X_t)-\lambda\cdot\E{p_{\theta}(\sigma_t|X_t)}\Bb{\log p_{\theta}(\sigma_t|X_t)}} \nonumber \\ 
    &\approx \sum_{n=1}^{N}\lrp{\underbrace{\frac{p\lrp{X_{t-1}|X_t,\sigma_t^{(n)}}}{\sum_{m=1}^{N}p\lrp{X_{t-1}|X_t,\sigma_t^{(m)}}}-\lambda \log p_{\theta}\lrp{\sigma_t^{(n)}|X_t}}_{\texttt{weight}}}\cdot\nabla_{\theta}\log p_{\theta}\lrp{\sigma_t^{(n)}|X_t},
\end{align}
where $\sigma_t^{(1)},\ldots,\sigma_t^{(N)}\sim p_{\theta}(\sigma_t|X_t)$.
We then substitute in \[p\lrp{X_{t-1}|X_t,\sigma_t^{(n)}}=\mathcal{N}\lrp{X_{t-1}|Q_{\sigma_t^{(n)}}X_t,I}\] for all $n\in[N]$.
Finally, we also subtract the exponential moving average $\texttt{weight}$ as a control variate for variance reduction, where the exponential moving average is given by $\texttt{ema} \gets \texttt{ema\_rate} \cdot \texttt{ema} + (1 - \texttt{ema\_rate})\cdot \texttt{weight}$ for each gradient descent step.

\subsection{Training Details and Architecture Hyperparameters}
\label{appendix:hyperparameters}

\paragraph{Hardware.}

The Jigsaw Puzzle and Sort 4-Digit MNIST Numbers experiments are trained and evaluated on the NVIDIA A40 GPU.
The TSP experiments are trained and evaluated on the NVIDIA A40 and A100 GPU.

\paragraph{Learning a Single Permutation List.}
The SEDD model we use in our experiments has about 25M parameters.
We use 7 encoder layers, 8 heads, model dimension 512, feed-forward dimension 2048, and dropout 0.1. We use the uniform transition for the forward process following the original work. 

Our model only has about 2M parameters. 
We use 7 encoder layers, 8 heads, model dimension 128, feed-forward dimension 512, and dropout 0.1.
We also use a learned embedding for the numbers.
For the diffusion process, we use a denoising schedule of $[0,8,10,15]$ for $n=100$ and $[0,9,10,12]$ for $n=200$.

For $n=100$, we use a batch size of 512 and 30K training steps on both methods. For $n=200$, we use a batch size of 128 and 30K training steps on both methods. For performance evaluation, we randomly sample 2560 sequences for SEDD and 2560 permutations for our method, and we perform their respective decoding processes.

\paragraph{Jigsaw Puzzle.}
For the Jigsaw Puzzle experiments, we use the AdamW optimizer \citep{loshchilov2019decoupled} with weight decay 1e-2, $\varepsilon=\text{1e-9}$, and $\beta=(0.9, 0.98)$.
We use the Noam learning rate scheduler given in \citep{transformer} with 51,600 warmup steps for Noisy MNIST and 46,000 steps for CIFAR-10.
We train for 120 epochs with a batch size of 64.
When computing the loss \eqref{loss}, we use Monte-Carlo estimation for the expectation and sample 3 trajectories.
For REINFORCE, we sampled 10 times for the Monte-Carlo estimation in \eqref{reinforce-entropy}, and we used an entropy regularization rate $\lambda=0.05$ and an \texttt{ema\_rate} of 0.995.
The neural network architecture and related hyperparameters are given in Table \ref{table:jigsaw-hyperparameters}.
The denoising schedules, with riffle shuffles as the forward process and GPL as the reverse process, are give in Table \ref{table:jigsaw-denoising-schedule}.
For beam search, we use a beam size of 200 when decoding from GPL, and we use a beam size of 20 when decoding along the diffusion denoising schedule.

\begin{table}[hbt]
\centering
\captionsetup{skip=5pt}
\caption{Jigsaw puzzle neural network architecture and hyperparameters.}
\label{table:jigsaw-hyperparameters}
\begin{tabular}{cc}
\toprule
\bf{Layer} & \bf{Details} \\
\midrule
\multirow{2}{*}{Convolution} & Output channels 32, kernel size 3, \\
 & padding 1, stride 1 \\
Batch Normalization & $-$ \\
ReLU & $-$ \\
Max-pooling & Pooling 2 \\
Fully-connected & Output dimension $(\texttt{dim\_after\_conv} + 128) / 2$ \\
ReLU & $-$ \\
Fully-connected & Output dimension 128 \\
\multirow{2}{*}{Transformer encoder} & 7 layers, 8 heads, model dimension ($d_{\text{model}}$) 128, \\
& feed-forward dimension 512, dropout 0.1 \\
\bottomrule
\end{tabular}
\end{table}

\begin{table}[hbt]
\centering
\captionsetup{skip=5pt}
\caption{Denoising schedules for the Jigsaw Puzzle task, where we use riffle shuffle in the forward process and GPL in the revserse process.}
\label{table:jigsaw-denoising-schedule}
\begin{tabular}{cc}
\toprule
\bf{Number of patches per side} & \bf{Denoising schedule} \\
\midrule
$2\times 2$ & $[0,2,7]$ \\
$3\times 3$ & $[0,3,5,9]$ \\
$4\times 4$ & $[0,4,6,10]$ \\
$5\times 5$ & $[0,5,7,11]$ \\
$6\times 6$ & $[0,6,8,12]$ \\
\bottomrule
\end{tabular}
\end{table}

\paragraph{Sort 4-Digit MNIST Numbers.}
For the task of sorting 4-digit MNIST numbers with $n\leq 100$, we use the exact training and beam search setup as the Jigsaw Puzzle, except that we do not need to use REINFORCE.
The neural network architecture is given in Table \ref{table:sort-hyperparameters},
The denoising schedules, with riffle shuffles as the forward process and GPL as the reverse process, are give in Table \ref{table:sort-denoising-schedule}.

For $n=200$, we use the cosine decay learning rate schedule with 2350 steps of linear warmup and maximum learning rate 5e-5. The neural network architecture is the same as that of $n\leq 100$, with the exception that we use $d_{\text{model}}=d_{\text{feed-forward}}=768$, 12 layers, and 12 heads for the transformer encoder layer. We use the PL distribution for the reverse process. 
When computing the loss, we randomly sample a timestep from the denoising schedule as in Eq.\eqref{eq:random-timestep} in Appendix \ref{appendix:random-timestep} due to efficiency reasons.
All other setups are the same as that of $n\leq 100$.

\begin{table}[hbt]
\centering
\captionsetup{skip=5pt}
\caption{Sort 4-digit MNIST numbers neural network architecture and hyperparameters.}
\label{table:sort-hyperparameters}
\begin{tabular}{cc}
\toprule
\bf{Layer} & \bf{Details} \\
\midrule
\multirow{2}{*}{Convolution} & Output channels 32, kernel size 5, \\
 & padding 2, stride 1 \\
Batch Normalization & $-$ \\
ReLU & $-$ \\
Max-pooling & Pooling 2 \\
\multirow{2}{*}{Convolution} & Output channels 64, kernel size 5, \\
 & padding 2, stride 1 \\
Batch Normalization & $-$ \\
ReLU & $-$ \\
Max-pooling & Pooling 2 \\
Fully-connected & Output dimension $(\texttt{dim\_after\_conv} + 128) / 2$ \\
ReLU & $-$ \\
Fully-connected & Output dimension 128 \\
\multirow{2}{*}{Transformer encoder} & 7 layers, 8 heads, model dimension ($d_{\text{model}}$) 128, \\
& feed-forward dimension 512, dropout 0.1 \\
\bottomrule
\end{tabular}
\end{table}

\begin{table}[hbt]
\centering
\captionsetup{skip=5pt}
\caption{Denoising schedules for the Sort 4-Digit MNIST Numbers task, where we use riffle shuffle in the forward process and GPL in the revserse process.}
\label{table:sort-denoising-schedule}
\begin{tabular}{cc}
\toprule
\bf{Sequence Length $n$} & \bf{Denoising schedule} \\
\midrule
3 & $[0,2,7]$ \\
5 & $[0,2,8]$ \\
7 & $[0,3,8]$ \\
9 & $[0,3,5,9]$ \\
15 & $[0,4,7,10]$ \\
32 & $[0,5,7,12]$ \\
52 & $[0,5,6,7,10,13]$ \\
100 & $[0,8,10,15]$ \\
200 & $[0,9,10,12]$ \\
\bottomrule
\end{tabular}
\end{table}

\paragraph{TSP.}

For solving the TSP, we perform supervised learning to train our SymmetricDiffusers to solve the TSP.
Let $\sigma^*$ be an optimal permutation, and let $X_0$ be the list of nodes ordered by $\sigma^*$.
We note that any cyclic shift of $X_0$ is also optimal.
Thus, for simplicity and without loss of generality, we always assume $\sigma^*(1)=1$.
In the forward process of SymmetricDiffusers, we only shuffle the second to the $n\th$ node (or component).
In the reverse process, we mask certain parameters of the reverse distribution so that we will always sample a permutation with $\sigma_t(1)=1$.

The architecture details are slightly different for TSP-20, where we input both node and edge features into our network.
Denote by $X_t$ the ordered list of nodes at time $t$.
We obtain $Y_t\in\R^{n\times d_{\text{model}}}$ as in Eq. \eqref{after-encoder-time}, where $\texttt{encoder}_{\theta}$ is now a sinusoidal embedding of the 2D coordinates.
Let $D_t\in\R^{n\times n}$ be the matrix representing the pairwise distances of points in $X_t$, respecting the order in $X_t$.
Let $E_t\in\R^{\binom{n}{2}}$ be the flattened vector of the upper triangular part of $D_t$. We also apply sinusoidal embedding to $E_t$ and add $\texttt{time\_embd}(t)$ to it. We call the result $F_t\in\R^{\binom{n}{2}\times d_{\text{model}}}$.

Now, instead of applying the usual transformer encoder with self-attentions, we alternate between cross-attentions and self-attentions.
For cross-attention layers, we use the node representations from the previous layer as the query, and we always use $K=V=F_t$.
We also apply an attention mask to the cross-attention, so that each node will only attend to edges that it is incident with.
For self-attention layers, we always use the node representations from the previous layer as input.
We always use an even number of layers, with the first layer being a cross-attention layer, and the last layer being a self-attention layer structured to produce the required parameters for the reverse distribution as illustrated in Appendix \ref{appendix:network_architecture}.
For hyperparameters, we use 16 alternating layers, 8 attention heads, $d_{\text{model}}=256$, feed-forward hidden dimension 1024, and dropout rate 0.1.

For training details on the TSP-20 task, we use the AdamW optimizer \citep{loshchilov2019decoupled} with weight decay 1e-4, $\varepsilon=\text{1e-8}$, and $\beta=(0.9, 0.999)$.
We use the cosine annealing learning rate scheduler starting from 2e-4 and ending at 0.
We train for 50 epochs with a batch size of 512.
When computing the loss \eqref{loss}, we use Monte-Carlo estimation for the expectation and sample 1 trajectory.
We use a denoising schedule of $[0,4,5,7]$, with riffle shuffles as the forward process and GPL as the reverse process.
Finally, we use beam search for decoding, and we use a beam size of 256 both when decoding from GPL and decoding along the denoising schedule.

For the TSP-50 task, we use the original architecture stated in Appendix \ref{appendix:network_architecture} without the edge information.
We train for 250 epochs with a batch size of 256. We use a denoising schedule of $[0,5,6,7]$. We use a beam size of 768 when decoding from GPL and decoding along the denoising schedule.
When decoding, we pick the permutation that gives the least TSP tour length in the final step of the beam search.
All other setups are identical with that of TSP-20.

\subsection{Baselines Implementation Details}
\paragraph{Gumbel-Sinkhorn Network.}
We have re-implemented the Gumbel-Sinkhorn Network \citep{mena2018learning} for application on jigsaw puzzles, following the implementations provided in the official repository\footnote{\url{https://github.com/google/gumbel_sinkhorn}}. To ensure a fair comparison, we conducted a thorough grid search of the model's hyper-parameters. The parameters included in our search space are as follows,

\begin{table}[ht]
\centering
\captionsetup{skip=5pt}
\caption{Hyperparameter Search Space for the Gumbel-Sinkhorn Network}
\label{tab:search_space_gumbel_sinkhorn}
\begin{tabular}{@{}lc@{}}
\toprule
\textbf{Hyperparameter} & \textbf{Values} \\
\midrule
Learning Rate (lr) & $\{10^{-3}, 10^{-4}, 10^{-5}\}$ \\
Batch Size & $\{50\}$ \\
Hidden Channels & $\{64, 128\}$ \\
Kernel Size & $\{3, 5\}$ \\
$\tau$ & $\{0.2, 0.5, 1, 2, 5\}$ \\
Number of Sinkhorn Iterations (n\_sink\_iter) & $\{20\}$ \\
Number of Samples & $\{10\}$ \\
\bottomrule
\end{tabular}
\end{table}

\paragraph{Diffsort \& Error-free Diffsort}We have implemented two differentiable sorting networks from the official repository\footnote{\url{https://github.com/jungtaekkim/error-free-differentiable-swap-functions}} specific to error-free diffsort. For sorting 4-digit MNIST images, error-free diffsort employs TransformerL as its backbone, with detailed hyperparameters listed in Table \ref{tab:hyperparameters for error-free diffsort}. Conversely, Diffsort uses a CNN as its backbone, with a learning rate set to $10^{-3.5}$; the relevant hyperparameters are outlined in Table \ref{tab:hyperparameters for diffsort}.

For jigsaw puzzle tasks, error-free diffsort continues to utilize a transformer, whereas Diffsort employs a CNN. For other configurations, we align the settings with those of tasks having similar sequence lengths in the 4-digit MNIST sorting task. For instance, for $3\times 3$ puzzles, we apply the same configuration as used for sorting tasks with a sequence length of 9.

\begin{table}[ht]
\captionsetup{skip=5pt}
\caption{Hyperparameters for Error-Free Diffsort on Sorting 4-Digit MNIST Numbers}
\label{tab:hyperparameters for error-free diffsort}
\centering
\begin{tabular}{ccccc}
\toprule
\textbf{Sequence Length}  & \textbf{Steepness} & \textbf{Sorting Network} & \textbf{Loss Weight} & \textbf{Learning Rate} \\
\midrule
3 & 10 & odd even  & 1.00 & $10^{-4}$ \\
5 & 26 & odd even  & 1.00 & $10^{-4}$ \\
7 & 31 & odd even  & 1.00 & $10^{-4}$ \\
9 & 34 & odd even  & 1.00 & $10^{-4}$ \\
15 & 25 & odd even  & 0.10 & $10^{-4}$ \\
32 & 124 & odd even  & 0.10 & $10^{-4}$ \\
52 & 130 & bitonic  & 0.10 & $10^{-3.5}$ \\
100 & 140 & bitonic  & 0.10 & $10^{-3.5}$ \\
200 & 200 & bitonic  & 0.10 & $10^{-4}$ \\
\bottomrule
\end{tabular}
\end{table}

\begin{table}[ht]
\captionsetup{skip=5pt}
\caption{Hyperparameters for Diffsort on Sorting 4-Digit MNIST Numbers}
\label{tab:hyperparameters for diffsort}
\centering
\begin{tabular}{ccc}
\toprule
\textbf{Sequence Length}  & \textbf{Steepness} & \textbf{Sorting Network}\\
\midrule
3 & 6 & odd even\\
5 & 20 & odd even \\
7 & 29 & odd even\\
9 & 32 & odd even\\
15 & 25 & odd even\\
32 & 25 & bitonic\\
52 & 25 & bitonic\\
100 & 25 & bitonic\\
200 & 200 & bitonic\\
\bottomrule
\end{tabular}
\end{table}

\paragraph{TSP.}
For the baselines for TSP, we first have 4 traditional operations research solvers.
Gurobi \citep{gurobi} and Concorde \citep{concorde2006} are known as exact solvers, while LKH-3 \citep{lkh} is a strong heuristic and 2-Opt \citep{lin1973effective} is a weak heuristic.
For LKH-3, we used 500 trials, and for 2-Opt, we used 5 random initial guesses with seed 42.

For the GCN model \citep{joshi2019efficient}, we utilized the official repository\footnote{\url{https://github.com/chaitjo/graph-convnet-tsp}} and adhered closely to its default configuration for the TSP-20 dataset. For DIFUSCO \citep{sun2023difusco}, we sourced it from its official repository\footnote{\url{https://github.com/Edward-Sun/DIFUSCO}} and followed the recommended configuration of TSP-50 dataset, with a minor adjustment in the batch size. We increased the batch size to 512 to accelerate the training process. For fair comparison, we also remove the post-processing heuristics in both models during the evaluation.

\section{Limitations}
\label{appendix: limitations}
Despite the success of this method on various tasks, the model presented in this paper still requires a time-space complexity of $O(n^2)$ due to its reliance on the parametric representation of GPL and the backbone of transformer attention layers. This complexity poses a significant challenge in scaling up to applications involving larger symmetric groups or Lie groups.

\end{document}